\documentclass{article}

\usepackage{microtype}
\usepackage{graphicx}
\usepackage{subcaption}
\usepackage{booktabs} 
\usepackage{multirow}
\usepackage{placeins}

\usepackage{hyperref}

\usepackage[accepted]{icml2026}

\usepackage{amsmath}
\usepackage{amssymb}
\usepackage{mathtools}
\usepackage{amsthm}

\usepackage[capitalize,noabbrev]{cleveref}

\theoremstyle{plain}
\newtheorem{theorem}{Theorem}[section]

\newtheorem{lemma}[theorem]{Lemma}

\theoremstyle{definition}
\newtheorem{definition}[theorem]{Definition}

\theoremstyle{remark}
\newtheorem{remark}[theorem]{Remark}

\usepackage[textsize=tiny]{todonotes}

\icmltitlerunning{PolyFlow: Safe and Efficient Polytope-Constrained Flow Matching}

\begin{document}

\twocolumn[

  \icmltitle{PolyFlow: Safe and Efficient Polytope-Constrained Flow Matching \\ with Constraint Embedding and Projection-free Update}



  \icmlsetsymbol{equal}{*}

  \begin{icmlauthorlist}
    \icmlauthor{Jianming Ma}{sjtu,sii,equal}
    \icmlauthor{Qiyue Yang}{sjtu,sii,equal}
    \icmlauthor{Yang Zhang}{sjtu}
    \icmlauthor{Liyun Yan}{sjtu,sii}
    \icmlauthor{Zhanxiang Cao}{sjtu,sii}
    \icmlauthor{Yazhou Zhang}{sjtu}
    \icmlauthor{Yue Gao}{sjtu,sii}
  \end{icmlauthorlist}

  \icmlaffiliation{sjtu}{Shanghai Jiao Tong University, Shanghai, China}
  \icmlaffiliation{sii}{Shanghai Innovation Institute, Shanghai, China}

  \icmlcorrespondingauthor{Yue Gao}{yuegao@sjtu.edu.cn}

  \icmlkeywords{constrained flow matching, safety constraint, motion planning, robot control}

  \vskip 0.3in
]



\printAffiliationsAndNotice{\icmlEqualContribution}

\begin{abstract}

While flow-based generative models have demonstrated strong performance across a wide range of domains, deploying them in safety-critical physical systems remains challenging due to strict constraint requirements. Existing approaches typically enforce safety through post-hoc corrections, which incur substantial computational overhead and may distort the learned distribution. We propose \textbf{PolyFlow}, a polytope-constrained flow matching framework that embeds constraints directly into the model and flow dynamics. PolyFlow introduces a discrete-time flow formulation and a projection-free architecture, which eliminate the discretization error and guarantee strict satisfaction of arbitrary polyhedral constraints, without the need for expensive iterative solvers. Experimental results show that PolyFlow achieves zero constraint violation while maintaining high distributional fidelity across a range of planning and control tasks. Compared to state-of-the-art constrained generation baselines, PolyFlow significantly reduces inference latency and demonstrates a favorable trade-off between safety, efficiency, and generative quality.

\end{abstract}

\section{Introduction}

Flow-based generative models have demonstrated remarkable performance and flexibility across a wide range of fields, ranging from high-fidelity image generation \cite{lipman2022flow} and video synthesis \cite{jinpyramidal} to complex decision-making tasks \cite{zheng2023guided}. These advances have motivated a growing transition toward deploying these powerful models in domains deeply rooted in the physical world, such as scientific discovery \cite{chen2024flow}, trajectory planning \cite{tan2025flow}, and robotic control \cite{black2024pi_0}.

Unlike traditional unconstrained generation tasks, physical applications are confronted with a fundamental challenge: \textit{safety} and \textit{constraint satisfaction}. In robot planning, for example, the environment is governed by strict geometric and dynamic constraints, including joint limits, obstacle avoidance \cite{li2025safe}, and actuation bounds \cite{orsolino2018application}. While an out-of-distribution pixel in image generation merely results in a visual artifact, a constraint violation in physical system can lead to hardware damage or catastrophic failure. This disparity highlights the need for generative models with explicit safety and constraint awareness.

Unfortunately, standard flow matching lacks inherent satisfaction for such safety \cite{yang2025safeflowmatcher}.
Even when both source and target distributions lie within the feasible region, standard flow matching often yields unsafe trajectories. Such violations stem from two primary sources:
(1) \textbf{Fitting Error}: The parameterized neural model inevitably deviates from the ground-truth target vector field due to limited capacity or optimization difficulties, which may cause the flow to drift into unsafe regions.
(2) \textbf{Numerical Integration Error}: Even if the learned ODE path is theoretically safe, the discretization required for sampling introduces numerical errors. Large integration steps—desirable for fast inference—potentially push the state across safety boundaries.

This challenge is further amplified in real-world applications where the training data itself may be unconstrained, yet we seek to enforce constraints during generation \cite{christopher2024constrained}. In such cases, naively matching the target distribution can result in conflicts with safety constraints.

Existing constrained generative methods typically rely on \textit{post-hoc correction}, such as projection \cite{dai2025safe, christopher2024constrained} or reflection \cite{xie2024rfm} operations applied during sampling. Although these methods can mitigate constraint violations to some extent, they suffer from notable limitations. Performing projections onto complex polytopes or manifolds is computationally expensive, creating a bottleneck for real-time inference. Furthermore, such external interventions act as interference to the learned flow dynamics, which may distort the distribution matching and trap the sampling process in local optima.

To address these challenges, we propose a novel philosophy: \textit{embedding constraints directly into the flow definition and model architecture}, rather than treating them as an afterthought. Following this principle, we present \textbf{PolyFlow}, a training paradigm designed for the strict satisfaction of arbitrary time-varying polyhedral constraints. 
First, to theoretically eliminate the risk of numerical integration error, PolyFlow reformulates the problem from continuous-time ODEs to a \textbf{discrete-time flow} framework. 
Second, to ensure the parameterized model inherently respects boundaries, we develop a \textbf{projection-free architecture} inspired by the Frank-Wolfe algorithm and ray-shooting techniques. This design guarantees that every update step resides within the feasible set by construction, eliminating the need for costly projection or optimization solvers during inference. Our main contributions are summarized as follows:

\begin{itemize}
    \item \textbf{Discrete-Time Flow Formulation:} We reformulate flow matching in a discrete-time setting and provide theoretical analysis. We prove that the interior safety of conditional flows guarantees the safety of the marginal flow, thereby eliminating numerical integration errors as a source of constraint violation.
    \item \textbf{Projection-Free Architecture:} We propose PolyFlow, a novel parameterization that combines ray-shooting with learnable gating factors. This architecture ensures strict satisfaction of convex constraints by design while avoiding the computational overhead of iterative projection or optimization solvers.
    \item \textbf{Experimental Validation:} We demonstrate that PolyFlow achieves zero constraint violation and superior distribution matching quality compared to baselines across various constrained generation tasks, while significantly reducing inference latency.
\end{itemize}

\section{Related Works}

In safe-critical fields, generative models must not only capture high-dimensional distributions but also strictly adhere to physical laws and geometric boundaries. 
As summarized in Table \ref{tab:method_comparison_visual}, existing approaches can be broadly categorized into two paradigms: 

\begin{table}[t]
    \caption{Qualitative comparison of constrained generation frameworks. \textbf{Constraint Generalization} refers to the ability to handle diverse constraint types.}
    \label{tab:method_comparison_visual}
    \centering
    \setlength{\tabcolsep}{4pt} 
    \resizebox{\linewidth}{!}{
        \begin{tabular}{lcc}
            \toprule
            \textbf{Methods} & \textbf{Constraint Generalization} & \textbf{Inference Speed} \\
            \midrule
            Projection-based & $\bullet\bullet\bullet$ & $\bullet\circ\circ$ \\
            Reflection-based & $\bullet\circ\circ$ & $\bullet\bullet\bullet$ \\
            CBF-based & $\bullet\bullet\circ$ & $\bullet\circ\circ$ \\
            Riemannian flow & $\bullet\circ\circ$ & $\bullet\bullet\bullet$ \\
            Gauge map / Mirror map & $\bullet\bullet\circ$ & $\bullet\bullet\bullet$ \\
            \midrule
            \textbf{PolyFlow (Ours)} & $\boldsymbol{\bullet\bullet\bullet}$ & $\boldsymbol{\bullet\bullet\bullet}$ \\
            \bottomrule
        \end{tabular}
    }
\end{table}

\textbf{Post-hoc Correction}\quad These approaches enforce safety constraints by adjusting trajectories during the sampling phase. \textit{Projection-based} methods \cite{christopher2024constrained, luan2025projected, jung2025joint} frame each sampling step as a constrained optimization problem, utilizing projection operators to map unsafe states back to the feasible region, which offers good scalability to complex constraints. 
Alternatively, \textit{CBF-based} methods \cite{xiao2023safediffuser, dai2025safe, cosner2024generative, liu2025guideflow} introduce auxiliary control inputs via Control Barrier Functions (CBFs) to ensure forward invariance of the safety set. However, designing valid CBFs for general constraints remains a challenging task. 

Despite their different formulations, both families of methods share two fundamental limitations. First, they generally require solving Quadratic Programming (QP) problems iteratively, which substantially increases inference latency and computational cost. Second, strong post-hoc safety corrections act as external disruptions to the learned dynamics, potentially trapping the generation process in local minima or distorting the target distribution.

\textbf{Intrinsic Constraint Embedding}\quad Instead of correcting violations after they occur, these methods embed constraints directly into the flow dynamics.
\textit{Reflection-based} models \cite{lou2023reflected, xie2024rfm, fishman2023diffusion} incorporate reflection terms into SDEs or ODEs to confine the process within boundaries, but deriving reflection terms for complex, non-smooth geometries is mathematically difficult. \textit{Riemannian flow} approaches \cite{davis2024fisher, cheng2024categorical} define flow directly on constrained manifolds, yet their applicability is largely limited to simple geometries due to the difficulty of constructing suitable Riemannian metrics for arbitrary constraints. More recently, \textit{Gauge map} and \textit{Mirror map} methods \cite{li2025gauge, guan2025mirror, feng2024neural} transform constrained domains into simpler unconstrained spaces via bijective mappings, where standard flow or diffusion models can be trained, and then map samples back to the original domain. However, these approaches rely on carefully designed mappings or distance functions and may introduce additional approximation errors or hyperparameter sensitivity.

A central challenge in constrained flow matching is the trade-off between \textit{distribution matching fidelity} and \textit{strict constraint satisfaction}. Post-hoc strategies decouple constraint enforcement from training, often leading to conflicts between the learned flow and safety filter. Conversely, intrinsic methods integrate constraints seamlessly but suffer from limited scalability and generalization to complex geometries.
Consequently, it remains an open problem to develop a general, computationally-efficient framework that embeds hard constraints into the flow architecture without compromising distribution quality.

\section{Background}

Standard Flow Matching (FM) models a \textit{continuous} probability path $p_t(\cdot)$ that transforms a simple prior distribution $p_0$ to a complex target distribution $p_1$. This transformation is governed by a time-dependent vector field $u_t: [0,1] \times \mathbb{R}^d \rightarrow \mathbb{R}^d$, which generates the flow $\psi_t$ via the ordinary differential equation (ODE):
\begin{equation}
    \frac{d}{dt}\psi_t(x) = u_t(\psi_t(x)), \quad \psi_0(x) = x,
\end{equation}
where $\psi_1(x)$ pushes samples from $p_0$ to $p_1$. The goal is to learn a parameterized vector field $u_\theta$ that approximates the ground-truth marginal vector field $u_t$ by minimizing the objective:
\begin{equation}
    \mathcal{L}_{\text{FM}}(\theta) = \mathbb{E}_{t,x_t} \left[ \|u_\theta(x_t, t) - u_t(x_t)\|^2 \right],
\end{equation}
where $t \sim \mathcal{U}[0,1]$, $x_t \sim p_t$. However, the marginal vector field $u_t$ is generally intractable. To address this, Conditional Flow Matching (CFM)~\cite{lipman2022flow} proposes working with conditional vector field $u_t(x|z)$ conditioned on a latent variable $z$, typically defined as a data pair $z=(x_0, x_1)$. Crucially, CFM proves that regressing $u_\theta$ towards the conditional vector field yields the same gradients as the intractable FM objective:
\begin{gather}
    \mathcal{L}_{\text{CFM}}(\theta) = \mathbb{E}_{t, z, x_t} \left[ \|u_\theta(x_t, t) - u_t(x_t|z) \|^2 \right], \\
    \nabla_\theta \mathcal{L}_{CFM} = \nabla_\theta \mathcal{L}_{FM}
\end{gather}
where $t \sim \mathcal{U}[0,1]$, $z \sim q_z$, $x_t \sim p_t(x_t|z)$. 

In practice, a linear probability path (Optimal Transport path) is often adopted, defined as $x_t = (1-t)x_0 + t x_1$. This choice yields a closed-form conditional vector field $u_t(x_t|x_0, x_1) = x_1 - x_0$. Consequently, the loss function simplifies to a simple regression problem:
\begin{equation}
    \mathcal{L}_{\text{CFM}}(\theta) = \mathbb{E}_{t, x_0\sim p_0, x_1\sim p_1} \left[ \|u_\theta(x_t, t) - (x_1 - x_0)\|^2 \right].
\end{equation}

\section{Method}
\label{sec:method}
While continuous flow matching provides an effective paradigm for unconstrained generative modeling, enforcing strict safety constraints during ODE integration is 
inherently difficult, as numerical integration errors accumulate 
and can violate feasibility boundaries. In this section, we introduce a discrete-time framework and provide rigorous theoretical analysis to justify discrete-time flow matching. Based on the discrete-time flow, we propose \textbf{PolyFlow}, a projection-free architecture designed to ensure strict safety satisfaction in constrained generation tasks.

\subsection{Discrete-Time Flow and Safety}

Let $\mathcal{T} = \{0, 1, \dots, T-1\}$ denote discrete time steps. We consider a dynamical system governed by a sequence of vector fields $\{u_t\}_{t \in \mathcal{T}}$, where $u_t: \mathbb{R}^d \rightarrow \mathbb{R}^d$.

\begin{definition}[Discrete-Time Flow]
The discrete-time flow $\psi: \{0, \dots, T\} \times \mathbb{R}^d \rightarrow \mathbb{R}^d$ is defined by the following difference equation:
\begin{equation}
    \begin{cases} 
    \psi_{t+1}(x) = \psi_t(x) + u_t(\psi_t(x)) \\ 
    \psi_0(x) = x 
    \end{cases}.
\end{equation}
Given an initial distribution $x \sim p_0$, this flow induces a probability path denoted by $p_t$, where $\psi_t(x) \sim p_t$ for $t=0, \dots, T$.
\end{definition}

We focus on generation within a safe region. Let $\mathcal{X}_c \subset \mathbb{R}^d$ be a compact and convex set representing the feasible space.

\begin{definition}[Interior Safe Discrete-Time Flow] \label{def:safe_discrete_flow}

A discrete-time flow generated by $\{u_t\}_{t \in \mathcal{T}}$ is \textit{interior safe} with respect to $\mathcal{X}_c$, if for any initial state $x_0 \sim p_0, x_0 \in \mathcal{X}_c$, the resulting trajectory $\psi_t(x_0)$ remains in the feasible set: 
\begin{equation}
    \psi_t(x_0) \in \mathcal{X}_c, \quad \forall t \in \{0, \dots, T\}.
\end{equation}
This implies recursive safety: if $x_t \in \mathcal{X}_c$, then $x_{t+1} = x_t + u_t(x_t) \in \mathcal{X}_c$.
\end{definition}

\begin{figure}[t]
    \centering
    \includegraphics[width=\linewidth]{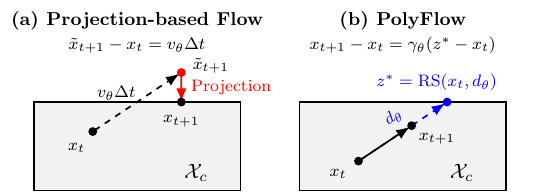}
    \caption{Illustration of PolyFlow update.}
    \label{fig:polyflow_update}
\end{figure}

\subsection{Discrete-Time Flow Matching}

To train the discrete-time flow, we adopt the CFM framework \cite{lipman2022flow} adapted for discrete-time dynamics. Let $z \sim p_Z(z)$ denote a sample of latent variables. We define a conditional vector field $u_t(x|z)$ that generates a conditional probability path $p_{t|Z}(x|z)$. The marginal distribution is as follows:
\begin{equation}
    p_t(x) = \int p_{t|Z}(x|z)p_Z(z) dz, \quad \forall t \in \{0, \dots, T\},
\end{equation}

Similarly to CFM, we further define the marginal expectation field $\tilde{u}_t(x)$:
\begin{equation}
    \tilde{u}_t(x) \triangleq \mathbb{E}_{z \sim p_{Z|t}(\cdot|x)} [u_t(x|z)], \quad \forall t \in \mathcal{T}.
\end{equation}
A parameterized vector field $u_\theta$ is trained to approximate this field via the CFM objective:
\begin{equation} \label{eq:discrete-cfm}
    \mathcal{L}_{\mathrm{CFM}} = \mathbb{E}_{z,t}\mathbb{E}_{x_t\sim p_{t|Z}(\cdot|z)}\Big[ \|u_\theta(x_t, t) - u_t(x_t|z)\|^2 \Big].
\end{equation}

\begin{remark}
Following the theoretical framework of CFM~\cite{lipman2022flow}, the parameterized vector field $u_\theta$ trained via Equation~\eqref{eq:discrete-cfm} approximates the marginal expectation field $\tilde{u}_t$ rather than the true marginal 
vector field. In the continuous-time setting, these two fields are equivalent and generate the same marginal path. However, in the discrete-time setting, this equivalence no longer holds: $\tilde{u}_t$ does not, in general, exactly generate the true marginal path $p_t$, introducing a discretization error that must be carefully bounded.
\end{remark}

 Let $\tilde{p}_t$ denotes the approximate probability path generated by iterating $\tilde{u}_t$, we quantify the discrepancy between this two probability paths using the $2$-Wasserstein distance.

\begin{theorem}[Discretization Error Bound]
\label{thm:error_bound}
Assume that the marginal expectation field $\tilde{u}_t(x)$ is $L$-Lipschitz continuous. Let $W_2(p_t, \tilde{p}_t)$ denote the $2$-Wasserstein distance between $p_t$ and $\tilde{p}_t$.  Then at step $t+1$:
\begin{align}
    W_2(p_{t+1}, \tilde{p}_{t+1}) \leq  (1+L) W_2(p_t, \tilde{p}_t) + \mathcal{E}_{match}(t). \notag
\end{align}
where $\mathcal{E}_{match}(t)=\sqrt{ \mathbb{E}_{z} \mathbb{E}_{x_t\sim p_{t|Z}} [\| \tilde{u}_t(x_t) - u_t(x_t|z)  \|^2] }$ represents the matching error.
\end{theorem}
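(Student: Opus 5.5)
We prove the bound with a synchronous coupling: one transport plan carries both the optimal coupling at step $t$ and the true conditional dynamics. The setup is as follows. The true marginal path satisfies $p_{t+1} = \mathrm{Law}(x_t + u_t(x_t|z))$ with $z\sim p_Z$ and $x_t\sim p_{t|Z}(\cdot|z)$, since the conditional flow moves $x_t$ to $x_{t+1}=x_t+u_t(x_t|z)\sim p_{t+1|Z}(\cdot|z)$. The approximate path satisfies $\tilde p_{t+1} = \mathrm{Law}(y_t + \tilde u_t(y_t))$ with $y_t\sim\tilde p_t$. Let $\pi$ be an optimal $W_2$ coupling of $(p_t,\tilde p_t)$. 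We glue it, via the disintegration/gluing lemma, to the joint law of $(z,x_t)$. This produces a triple $(z,x_t,y_t)$ with $(z,x_t)\sim p_Z(z)\,p_{t|Z}(x_t|z)$, $(x_t,y_t)\sim\pi$, and $z$ conditionally independent of $y_t$ given $x_t$.

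With this coupling, $(x_t+u_t(x_t|z),\; y_t+\tilde u_t(y_t))$ is a coupling of $(p_{t+1},\tilde p_{t+1})$. Hence
\begin{align}
W_2(p_{t+1},\tilde p_{t+1}) \le \big\| (x_t - y_t) + (\tilde u_t(x_t)-\tilde u_t(y_t)) + (u_t(x_t|z)-\tilde u_t(x_t)) \big\|_{L^2}, \notag
\end{align}
where we have added and subtracted $\tilde u_t(x_t)$. Minkowski's inequality in $L^2(\Omega)$ splits this into three terms:
\begin{itemize}
\item The first term is $\|x_t-y_t\|_{L^2}=W_2(p_t,\tilde p_t)$ by optimality of $\pi$.
\item The second term is at most $L\,W_2(p_t,\tilde p_t)$ by the Lipschitz assumption on $\tilde u_t$.
\item The third term depends only on the pair $(z,x_t)$, whose law is the true conditional joint. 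It therefore equals exactly $\mathcal{E}_{match}(t)$.
\end{itemize}
Summing the three gives $(1+L)W_2(p_t,\tilde p_t)+\mathcal{E}_{match}(t)$.

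The main obstacle is rigor in the coupling step: we need a single probability space carrying $z$, $x_t$ and $y_t$ with the correct marginals. The gluing lemma handles this, since the measures live on Polish spaces. Existence of an optimal coupling requires finite second moments. These are guaranteed here by compactness of $\mathcal{X}_c$, or can be assumed; otherwise we use an $\varepsilon$-optimal coupling and let $\varepsilon\to0$.

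A second subtlety is the identity $p_{t+1}=\mathrm{Law}(x_t+u_t(x_t|z))$. This is exactly the statement that the conditional field generates the conditional path in discrete time, which is the defining property of $u_t(\cdot|z)$ in this framework. The marginal field $\tilde u_t$ enters only through the added and subtracted term. We never need that $\tilde u_t$ generates $p_t$, which is precisely what fails in discrete time.
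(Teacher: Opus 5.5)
Your proposal is correct and is essentially the paper's proof: take an optimal coupling of $(p_t,\tilde p_t)$, run the conditional dynamics on one side and $\tilde u_t$ on the other, add and subtract $\tilde u_t(x_t)$, and conclude with Minkowski plus the Lipschitz bound (the paper groups your first two terms as $\mathcal{T}(x_t)-\mathcal{T}(\tilde x_t)$ with $\mathcal{T}(x)=x+\tilde u_t(x)$, which gives the same estimate). Your explicit gluing-lemma construction of $(z,x_t,y_t)$ simply makes rigorous a step the paper leaves implicit, namely drawing $z\sim p(z|x_t)$ on top of the optimal coupling.
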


Detailed proofs for all theorems are provided in Appendix \ref{app:error_bound_proofs}. 

Theorem~\ref{thm:error_bound} reveals that the generation error 
accumulates recursively and is controlled by the matching error 
$\mathcal{E}_{\mathrm{match}}(t)$. When the CFM objective in 
Equation~\eqref{eq:discrete-cfm} is minimized, $\mathcal{E}_{\mathrm{match}}(t)$ 
is small, and the approximation error remains bounded over all $T$ steps. 
This justifies the use of $\tilde{u}_t$ as a tractable proxy for discrete generation.

Although discretization inevitably introduces minor deviations in the marginal distribution, the discrete-time flow framework theoretically circumvents the numerical integration errors inherent to continuous ODE sampling. This structural advantage enables rigorous safety guarantees directly within the discrete formulation.

\begin{theorem}[Safety Preservation]
\label{thm:safety}
Let $\mathcal{X}_c$ be a convex set. If the conditional flow is interior safe (Definition~\ref{def:safe_discrete_flow}), then the marginal expectation field $\tilde{u}_t(x)$ is also interior safe.
\end{theorem}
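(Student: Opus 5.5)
The claim to establish is the one-step recursive form of interior safety from Definition~\ref{def:safe_discrete_flow}: for every $t\in\mathcal{T}$ and every $x\in\mathcal{X}_c$ at which $\tilde{u}_t$ is defined, $x+\tilde{u}_t(x)\in\mathcal{X}_c$. Iterating this from $\psi_0(x_0)=x_0\in\mathcal{X}_c$ then gives $\psi_t(x_0)\in\mathcal{X}_c$ for all $t\in\{0,\dots,T\}$, which is the definition of an interior safe flow generated by $\{\tilde{u}_t\}$. So the whole argument reduces to a single step, and the point is that adding a fixed $x$ commutes with taking an expectation.

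First I interpret the hypothesis pointwise. The conditional flow being interior safe means that for $p_Z$-almost every $z$ and every $t$, the conditional update maps the feasible set into itself: $x\in\mathcal{X}_c$ implies $x+u_t(x|z)\in\mathcal{X}_c$. Now fix $t$ and $x\in\mathcal{X}_c$ with $p_t(x)>0$, so that the posterior $p_{Z|t}(\cdot|x)=p_{t|Z}(x|\cdot)p_Z(\cdot)/p_t(x)$ is a well-defined probability measure. Because this posterior is absolutely continuous with respect to $p_Z$, any $p_Z$-null set of exceptional $z$ is also posterior-null. Hence the random point $Y_z:=x+u_t(x|z)$ lies in $\mathcal{X}_c$ for posterior-almost every $z$.

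The core step uses the fact that the posterior has total mass one:
\begin{equation*}
x+\tilde{u}_t(x)=\mathbb{E}_{z\sim p_{Z|t}(\cdot|x)}\big[x+u_t(x|z)\big]=\mathbb{E}_{z\sim p_{Z|t}(\cdot|x)}[Y_z].
\end{equation*}
This exhibits $x+\tilde{u}_t(x)$ as the barycenter of a probability measure supported on $\mathcal{X}_c$. Since $\mathcal{X}_c$ is convex and closed (it is compact), the barycenter belongs to $\mathcal{X}_c$. One way to see this is a separating hyperplane argument: if the mean $m$ were outside $\mathcal{X}_c$, there would be $a$ and $b$ with $a^\top y\le b$ for all $y\in\mathcal{X}_c$ but $a^\top m>b$. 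This contradicts $a^\top m=\mathbb{E}[a^\top Y_z]\le b$. Boundedness of $\mathcal{X}_c$ also makes $Y_z$, and hence $u_t(x|z)=Y_z-x$, integrable, so $\tilde{u}_t(x)$ is finite.

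The only real obstacle is measure-theoretic bookkeeping rather than geometry. With a continuous latent variable the expectation is an integral, not a finite convex combination, so I use the separation argument above instead of the finite definition of convexity. This step is exactly where closedness of $\mathcal{X}_c$ is needed. I also need to address points where $p_t(x)=0$. There $\tilde{u}_t$ is unconstrained by the data, and I would either restrict the statement to $\mathrm{supp}\,p_t$ or define $\tilde{u}_t(x)=0$ off the support, which is trivially safe. If the iteration is claimed only for trajectories started from $p_0$, I would note that the marginal-path trajectories stay in the support where the posterior is defined.
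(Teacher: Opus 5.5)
Your proposal is correct and follows the paper's argument: write $x+\tilde{u}_t(x)$ as the posterior mean of $x+u_t(x|z)\in\mathcal{X}_c$ and invoke convexity. Your separating-hyperplane step makes explicit what the paper only asserts as ``closed under expectations''; it uses closedness from the standing compactness assumption, although in $\mathbb{R}^d$ closedness is not actually needed. Drop the side remark that trajectories of the $\tilde{u}_t$-iteration stay in $\mathrm{supp}\,p_t$: as the paper's own Remark notes, iterating $\tilde{u}_t$ produces $\tilde{p}_t\neq p_t$ in discrete time, so those trajectories can leave the support, and it is your other option (setting $\tilde{u}_t=0$ off the support) that makes the iteration legitimate.
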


Theorem \ref{thm:safety} guarantees that aggregating safe conditional flows results in a safe marginal flow. To satisfy the preconditions of this theorem, we construct the conditional flow using linear interpolation between endpoints $x_0, x_1 \in \mathcal{X}_c$. The conditional field is defined as $u_t(x|z) = (x_1 - x_0)/T$. Due to the convexity of $\mathcal{X}_c$, any point on the segment connecting $x_0$ and $x_1$ remains within $\mathcal{X}_c$. Consequently, the conditional flow is interior safe, and by Theorem~\ref{thm:safety}, the marginal expectation field $\tilde{u}_t$ also preserves these safety constraints.

\subsection{PolyFlow: Safe Discrete-Time Flow Matching}
\label{subsec:polyflow}

\begin{algorithm}[t]
   \caption{PolyFlow Training}
   \label{alg:polyflow_train}
\begin{algorithmic}[1]
   \STATE {\bfseries Input:} Source and target dist. $p, q$, Safety Polytope $\mathcal{X}_c$.
   \STATE {\bfseries Initialize:} Direction network $d_\theta$, Gating network $\gamma_\theta$.
   
   \WHILE{not converged}
       \STATE Sample batches $x_0 \sim p, \; x_1 \sim q$.
       \STATE Sample time steps $t \sim \mathcal{U}(\{0, \dots, T-1\})$.
       
       \STATE $x_t \leftarrow (1 - \frac{t}{T})x_0 + \frac{t}{T}x_1$ \hfill $\triangleright$ \textit{Interpolant state}
       \STATE $u_{gt} \leftarrow \frac{1}{T}(x_1 - x_0)$ \hfill $\triangleright$ \textit{Target discrete update}
       
       \STATE $d \leftarrow d_\theta(x_t, t, \mathcal{X}_c)$ \hfill $\triangleright$ \textit{Predict raw direction}
       \STATE $z^* \leftarrow \text{RS}(x_t, d, \mathcal{X}_x)$ \hfill $\triangleright$ \textit{Ray Shooting to } $\partial\mathcal{X}_c$
       \STATE $\gamma \leftarrow \text{Sigmoid}(\gamma_\theta(x_t, t, \mathcal{X}_c))$ \hfill $\triangleright$ \textit{Predict step size }
       \STATE $u_\theta \leftarrow \gamma \cdot (z^* - x_t)$ \hfill $\triangleright$ \textit{Construct safe update}
       
       \STATE $\mathcal{L} \leftarrow \| u_\theta - u_{gt} \|^2$ 
       \STATE Update $\theta \leftarrow \theta - \eta \nabla_\theta \mathcal{L}$
   \ENDWHILE
\end{algorithmic}
\end{algorithm}

Although Theorem \ref{thm:safety} guarantees safety for the marginal field $\tilde{u}_t$, the parameterized network $u_\theta$ is only an approximation of the field. To enforce strict safety, we reformulate the CFM training objective as a constrained optimization problem:
\begin{equation}
\begin{aligned}
    \min_\theta &\;\mathcal{L}_{CFM} = \mathbb{E}_z\mathbb{E}_{x_t\sim p_{t|Z}(\cdot|z)}\Big[ \|u_\theta(x_t, t) - u_t(x_t|z)\|^2 \Big] \\
    \text{s.t.}& \quad  x_t + u_\theta(x_t, t) \in \mathcal{X}_c, \quad \forall x_t \in \mathcal{X}_c, \forall t \in \mathcal{T}
\end{aligned}
\end{equation}

Many existing approaches enforce constraints by introducing a projection operator during sampling, i.e., $x_{t+1} = \text{Proj}_{\mathcal{X}_c}(x_t + u_\theta(x_t, t))$. Such post-hoc projections incur additional computational cost and may interfere with the learned flow dynamics. In contrast, we adopt a \textit{projection-free} strategy that embeds geometric constraints directly into the architecture of the parameterized flow model.

\textbf{PolyFlow} gets inspiration from the Frank-Wolfe algorithm (see Appendix \ref{app:fw}), which addresses constrained optimization problems by moving towards vertices of the feasible set rather than projecting gradients.

The core insight is to parameterize the update vector $u_\theta$ as a scaled direction pointing towards a boundary element of the safety polytope. As a direct Frank-Wolfe analogy, one may consider the following formulation:
\begin{equation}
    u_\theta(x_t, t) = \gamma_\theta(x_t, t) \cdot \left( \text{Selector}_\theta(\mathcal{V}(\mathcal{X}_c)) - x_t \right), \label{eq:vertex_selctions}
\end{equation}
where $\mathcal{V}(\mathcal{X}_c)$ is the set of vertices of the safety polytope $\mathcal{X}_c$, $\text{Selector}_\theta$ denotes a module that selects a target vertex from this set, and $\gamma_\theta \in [0, 1]$ represents a learned adaptive step size. By convexity, if $x_t \in \mathcal{X}_c$, the update guarantees $x_{t+1} \in \mathcal{X}_c$.

However, a direct implementation of Eq. \eqref{eq:vertex_selctions} faces several practical challenges. First, the discrete nature of the vertex selection makes the architecture non-differentiable. Second, restricting update directions solely towards vertices—as in standard Frank-Wolfe—often results in "zig-zagging" behavior \cite{awayFW}, leading to non-smooth trajectories.

To overcome these limitations, we decouple the update into two components: a freely-learned direction and a bounded step magnitude. Instead of constraining the update direction to point exclusively towards vertices, we allow the model to learn a continuous direction field and transfer it onto the safety set boundary using a \textit{Ray Shooting} operation. Formally, the update rule is reformulated as:
\begin{equation}
    u_\theta(x_t, t) = \gamma_\theta(x_t, t) \cdot \left( \text{RS}(x_t, d_\theta(x_t, t)) - x_t \right), \label{eq:ray_shooting}
\end{equation}
where:

1) $d_\theta(x_t, t): \mathbb{R}^d \times \mathbb{R} \to \mathbb{R}^d$ is a neural network predicting an unconstrained direction vector (analogous to the unconstrained gradient/score).

2) $\text{RS}(x_t, d_\theta): \mathcal{X}_c \times \mathbb{R}^d \to \partial\mathcal{X}_c$ is the \textbf{Ray Shooting} operator  (see Appendix \ref{app:ray_shooting}). It computes the intersection point between the boundary $\partial\mathcal{X}_c$ and a ray originating at $x$ in the direction of $d$.

3) $\gamma_\theta(x_t, t) \in [0, 1]$ is a learned scalar gating factor (via a sigmoid output) that controls the step length along the line segment connecting the current state $x_t$ and the boundary intersection.

This parameterization offers distinct advantages. First, 
computing the ray-boundary intersection is differentiable with respect to both the origin $x_t$ and the direction $d_\theta$. Second, predicting a continuous direction $d_\theta$ allows the model to target any point on the polytope boundary (including faces and edges), substantially increasing the degrees of freedom. This flexibility enables the flow to generate smooth and efficient trajectories and avoid oscillatory behavior. Consequently, by combining projection-free philosophy with differentiable ray shooting, PolyFlow achieves strict safety satisfaction by construction while maintaining training tractability and trajectory smoothness. Pseudo codes for the training and sampling process are presented in Algorithm~\ref{alg:polyflow_train} and \ref{alg:polyflow_sample}.

\subsection{Practical Implementation}

\textbf{Initial Sampling}\quad According to Definition~\ref{def:safe_discrete_flow} and Theorem~\ref{thm:safety}, strict constraint satisfaction requires that the support of the initial distribution lies entirely in the feasible region, $\text{supp}(p) \subset \mathcal{X}_c$. Therefore, we construct the initial distribution by uniformly sampling within the Chebyshev ball of the convex polytope $\mathcal{X}_c$. Calculating the Chebyshev ball is formulated as a Linear Programming (LP) problem, which is computationally cheaper than the QP projection used in projection-based methods. Furthermore, for tasks with static constraints, this LP problem needs to be solved only once during the pre-computation phase, resulting in negligible additional computational overhead during online sampling.

\begin{figure}[t]
    \centering
    \includegraphics[width=0.9\linewidth]{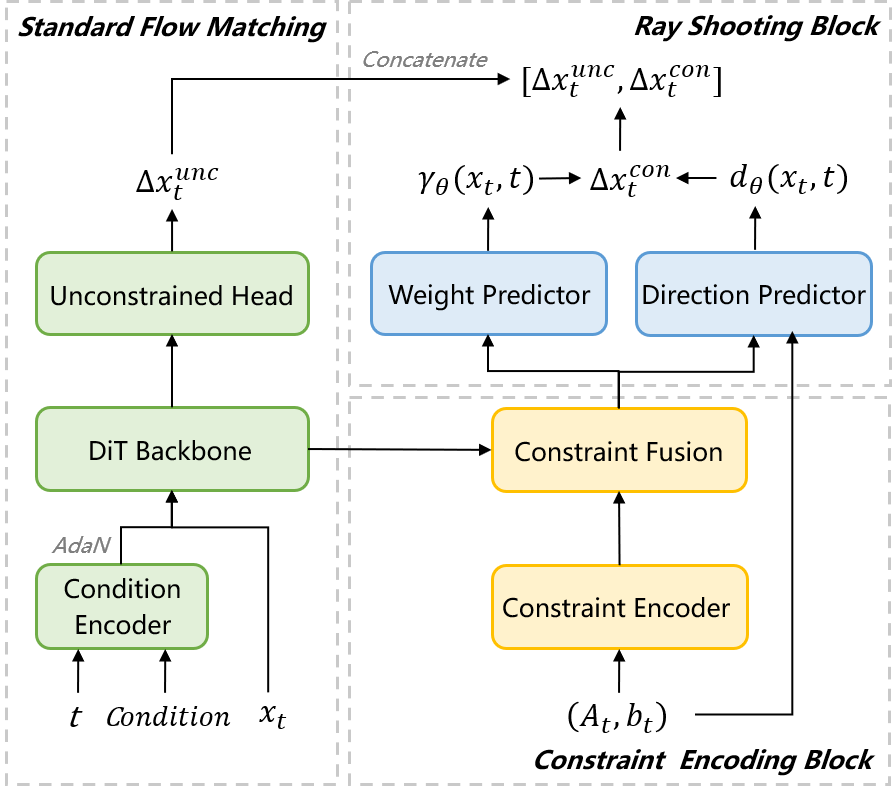}
    \caption{Overall framework of PolyFlow.}
    \label{fig:framework}
\end{figure}

\textbf{Model Structure} \quad
Our approach integrates seamlessly into existing flow model architectures. Figure~\ref{fig:framework} illustrates the overall framework of PolyFlow. Since constraints may apply to only a subset of dimensions of $x$, we partition $x$ into an unconstrained component $x^{unc}$ and a constrained component $x^{con}$, which are processed by separate output heads. For the constrained component, we augment the standard DiT~\cite{peebles2023scalable} backbone with two key modules: a \textit{Constraint Encoding Block} and a \textit{Ray Shooting Block}.
The Constraint Encoding Block takes the set of linear inequality constraints $\mathcal{C} = \{(A_i, b_i) \mid A_i \in \mathbb{R}^{n},\, b_i \in \mathbb{R}\}_{i=1}^m$ (i.e., $Ax \leq b$) as input. To handle the permutation-invariant nature of the constraint set, we employ Transformer blocks without positional encoding~\cite{lee2019set}. The resulting constraint embeddings are fused with the latent representation of the DiT backbone via a cross-attention mechanism. The Ray Shooting Block subsequently takes the fused representation as input and predicts two quantities: a weight $\gamma_\theta(x_t, t)$ and a direction $d_\theta(x_t, t)$, which together parameterize the constrained update step $\Delta x_t^{con}$.

\section{Experiments}
\label{sec:experiments}
\begin{figure*}[t] 
    \centering
    \begin{subfigure}[b]{0.45\textwidth} 
        \centering
        \includegraphics[width=\linewidth]{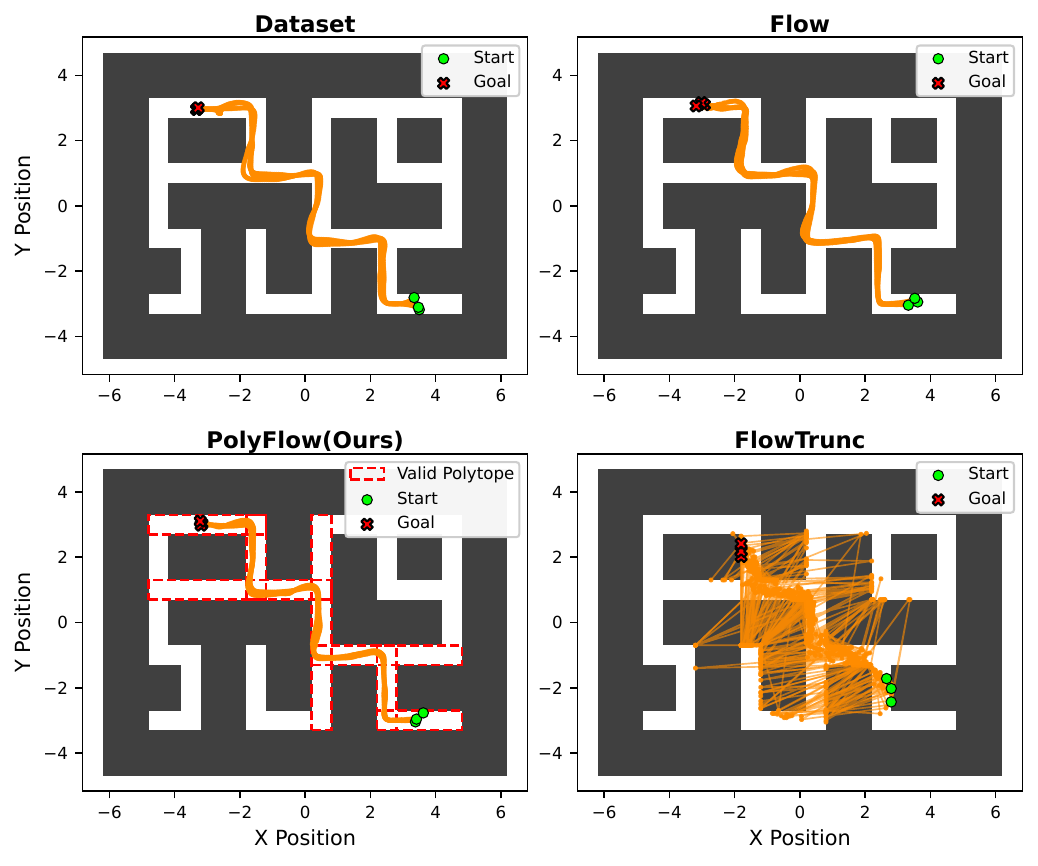} 
    \end{subfigure}
    \hspace{5mm} 
    \begin{subfigure}[b]{0.45\textwidth}
        \centering
        \includegraphics[width=\linewidth]{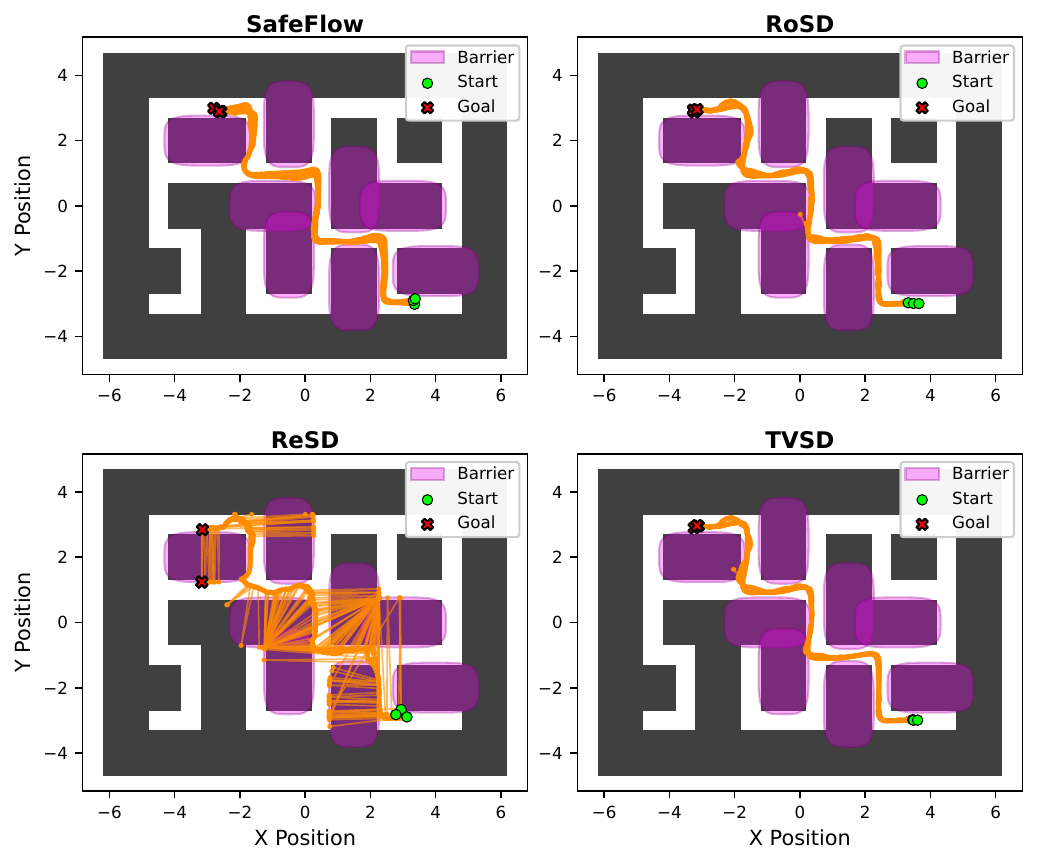}
    \end{subfigure}
    \caption{Generated trajectories of different methods in Maze task.}
    \label{fig:maze_compare}
\end{figure*}

\begin{table*}[t]
\caption{Metric comparisons for 2D Maze task. \textbf{(a)} shows the performance using the default sampling steps for baselines. \textbf{(b)} restricts all methods to exactly $N=10$ sampling steps to provide a strict equal-computation comparison. \textbf{Bold} indicates the best performance.}
\label{tab:maze_compare}
\begin{center}
    \subcaption*{\textbf{(a) Default Sampling Steps }(Flow/FlowTrunc/SafeFlow: N=200; RoSD/ReSD/TVSD: N=256)}
    \resizebox{\linewidth}{!}{
    \begin{tabular}{lcccccccc}
        \toprule
        Metric & PolyFlow-attn & PolyFlow-mlp & Flow & FlowTrunc & SafeFlow & RoSD & ReSD & TVSD \\
        \midrule
        Safety($\uparrow$) & $\mathbf{1.0}$ & $\mathbf{1.0}$ & $0.765$ & $0.0$ & $0.965$ & $0.98$ & $0.34$ & $0.945$\\
        \midrule
        MMD($\downarrow$)  & $6.20 \times 10^{-6}$ & $\mathbf{2.62 \times 10^{-6}}$ & $2.57 \times 10^{-5}$ & $0.202$ & $5.77 \times 10^{-3}$ & $3.55 \times 10^{-4}$ & $4.25\times 10^{-3}$ & $7.62 \times 10^{-4}$\\
        W($\downarrow$)  & $0.041$ & $\mathbf{0.033}$ & $0.046$ & $2.361$ & $0.321$ & $0.129$ & $0.665$ & $0.166$ \\
        KL($\downarrow$)  & $0.134$ & $\mathbf{0.050}$ & $0.168$ & $9.093$ & $2.842$ & $38.057$ & $1.043$ & $65.709$\\
        \midrule
        Cur($\downarrow$) & $0.098$ & $0.097$ & $\mathbf{0.084}$ & $2.075$ & $0.164$ & $0.086$ & $1.065$ & $\mathbf{0.084}$\\
        Acc($\downarrow$) & $6.15 \times 10^{-3}$ & $6.37 \times 10^{-3}$ & $\mathbf{4.98 \times 10^{-3}}$ & $2.00$ & $9.13 \times 10^{-3}$ & $5.57\times 10^{-3}$ & $1.46$ & $5.25\times 10^{-3}$ \\
        \midrule
        TotalTime($\downarrow$) & $2.11$ & $\mathbf{0.58}$ & $3.65$ & $3.69$ & $153.50$ & $10.30$ & $73.46$ & $57.52$ \\
        StepTime($\downarrow$) & $0.211$ & $0.058$ & $\mathbf{0.018}$ & $\mathbf{0.018}$ & $0.767$ & $0.040$ & $0.299$ & $0.225$ \\
        \bottomrule
    \end{tabular}
    }
    
    \vspace{4mm} 
    
    \subcaption*{\textbf{(b) Equal Sampling Steps ($N=10$)}}
    \resizebox{\linewidth}{!}{
    \begin{tabular}{lcccccccc}
        \toprule
        Metric & PolyFlow-attn & PolyFlow-mlp & Flow & FlowTrunc & SafeFlow & RoSD & ReSD & TVSD \\
        \midrule
        Safety($\uparrow$) & $\mathbf{1.0}$ & $\mathbf{1.0}$ & $0.845$ & $0.0$ & $0.530$ & $0.990$ & $0.145$ & $0.0$\\
        \midrule
        MMD($\downarrow$)  & $6.20 \times 10^{-6}$ & $\mathbf{2.62 \times 10^{-6}}$ & $3.50 \times 10^{-5}$ & $2.71 \times 10^{-2}$ & $6.55 \times 10^{-3}$ & $1.76 \times 10^{-4}$ & $5.46 \times 10^{-2}$ & $9.01 \times 10^{-2}$\\
        W($\downarrow$)  & $0.041$ & $\mathbf{0.033}$ & $0.054$ & $0.895$ & $0.348$ & $0.124$ & $1.800$ & $2.450$ \\
        KL($\downarrow$)  & $0.134$ & $\mathbf{0.050}$ & $0.266$ & $7.860$ & $3.140$ & $50.200$ & $2.720$ & $3.740$\\
        \midrule
        Cur($\downarrow$) & $0.098$ & $\mathbf{0.097}$ & $0.193$ & $1.951$ & $1.084$ & $0.150$ & $1.548$ & $2.094$\\
        Acc($\downarrow$) & $\mathbf{6.15 \times 10^{-3}}$ & $6.37 \times 10^{-3}$ & $1.14 \times 10^{-2}$ & $0.875$ & $9.11 \times 10^{-2}$ & $1.76 \times 10^{-2}$ & $2.288$ & $4.223$\\
        \midrule
        TotalTime($\downarrow$) & $2.11$ & $0.58$ & $\mathbf{0.324}$ & $0.363$ & $7.558$ & $1.120$ & $8.926$ & $2.558$ \\
        StepTime($\downarrow$) & $0.211$ & $0.058$ & $\mathbf{0.032}$ & $0.036$ & $0.756$ & $0.112$ & $0.893$ & $0.256$ \\
        \bottomrule
    \end{tabular}
    }
\end{center}
\end{table*}

We design experiments to address the following questions:

{\textbf{Q1}: Can PolyFlow ensure strict constraint satisfaction across diverse tasks?}

{\textbf{Q2}: Does PolyFlow achieve a superior trade-off between task performance and inference efficiency compared to strong constrained baselines?}

{\textbf{Q3}: Can PolyFlow effectively handle \textit{dynamic} and \textit{complex} constraint formulations?}

\subsection{Experimental Setup}

\paragraph{Baselines} 
To evaluate the performance of our method, we compare it against several representative constrained generation approaches: 
(1) \textbf{SafeFlow}~\cite{dai2025safe}, a CBF-based method that incorporates forward-invariance control signals into the flow matching vector field to guide generation; 
(2) \textbf{SafeDiffuser} variants (\textbf{RoSD/ReSD/TVSD})~\cite{xiao2023safediffuser}, which integrate CBFs into diffusion models using distinct guidance strategies; 
(3) \textbf{FlowTrunc}, a projection-based baseline that enforces constraints by truncating violations during generation; and 
(4) \textbf{GaugeFlow}~\cite{li2025gauge}, which maps the feasible region to a unit ball via gauge maps and utilizes reflection for constraint satisfaction.

\textbf{Tasks Description} \quad The proposed method is validated across three representative constrained robotic planning and control benchmarks, ranging from low-dimensional navigation to high-dimensional quadrupedal locomotion. 

\textit{1) 2D Maze Navigation.} 
This serves as a standard benchmark for testing constraint satisfaction in trajectory generation. The agent must generate strictly collision-free paths from a start to a goal. Although CBF-based baselines approximate the safe set using a series of hyper-ellipsoids, PolyFlow decomposes the free space into a sequence of convex safe corridors (see Appendix \ref{app:maze2d}), as illustrated in Figure~\ref{fig:maze_compare}.

\textit{2) Gym Locomotion.} 
We select five constrained closed-loop control tasks of varying complexity: \textit{Hopper-Simple}, \textit{Hopper-Complex}, \textit{Walker2d-Simple}, \textit{Walker2d-Complex}, and \textit{HalfCheetah}. For the Hopper and Walker2d tasks, constraints are imposed on the state space, whereas for HalfCheetah, constraints are applied directly to the action space. We define both \textit{Simple} and \textit{Complex} constraint formulations to test model robustness. (see Appendix \ref{app:gym_locomotion})

\textit{3) Quadruped Locomotion.} 
To assess PolyFlow's capability in handling dynamic and complex constraints (\textbf{Q3}), we apply it to the control of a Unitree Go2 quadruped robot. In this challenging scenario, the action space is restricted by complex, time-varying friction cone constraints to ensure non-slip contact with the ground. (see Appendix \ref{app:isaac_gym_locomotion})

\begin{table*}[t]
\centering
\caption{\textbf{Generative metric comparison across five tasks}. The \textit{Dismatching Score} is calculated as $\frac{1}{3}(\frac{\text{MMD}}{\text{MMD}_{\text{Flow}}} + \frac{\text{W}}{\text{W}_{\text{Flow}}} + \frac{\text{KL}}{\text{KL}_{\text{Flow}}})$ relative to the Flow baseline (detailed in Table~\ref{tab:gym_detail_result}). The return is calculated by rolling out with 10 different seeds. We use PolyFlow-attn in these tasks. }
\label{tab:gym_result}
\resizebox{0.8\linewidth}{!}{
\begin{tabular}{llccccc}
\toprule
\textbf{Task} & \textbf{Metric} & \textbf{Flow} & \textbf{PolyFlow (Ours)} & \textbf{SafeFlow} & \textbf{RoSD} & \textbf{GaugeFlow}\\
\midrule
\multirow{4}{*}{\textbf{Hopper-Simple}} 
 & Dismatching Score ($\downarrow$)  & -- & \textbf{0.899} & 0.984 & 5.068 & 1.049 \\
 & Total Time (s) ($\downarrow$)                 & 0.698 & \textbf{0.075} & 0.833 & 1.749 & 0.867 \\
 & Safety Rate ($\uparrow$)     & 0.755 & 1.000 & 1.000 & 1.000 & 1.000 \\
 & Rollout Return ($\uparrow$)          & 2450 $\pm$ 878 & \textbf{3187 $\pm$ 753} & 2628 $\pm$ 887 & 961 $\pm$ 613 & 2473 $\pm$ 951 \\
\midrule
\multirow{4}{*}{\textbf{Hopper-Complex}} 
 & Dismatching Score ($\downarrow$)  & -- & \textbf{1.022} & 1.169 & 27.120 & 1.189 \\
 & Total Time (s) ($\downarrow$)                  & 0.698 & \textbf{0.081} & 15.125 & 2.114 & 0.878 \\
& Safety Rate ($\uparrow$)      & 0.005 & 1.000 & 1.000 & 1.000 & 1.000 \\
 & Rollout Return ($\uparrow$)           & 2450 $\pm$ 878 & \textbf{2949 $\pm$ 838} & 2397$\pm$877 & 937 $\pm$ 539 & 2851 $\pm$ 944 \\
\midrule
\multirow{4}{*}{\textbf{Walker2d-Simple}} 
 & Dismatching Score ($\downarrow$)  & -- & \textbf{0.892} & 1.000 & 1.240 & 1.041 \\
 & Total Time (s) ($\downarrow$)                & 0.403 & \textbf{0.081} & 0.598 & 2.043 & 0.649 \\
 & Safety Rate ($\uparrow$)     & 0.985 & 1.000 & 1.000 & 1.000 & 1.000 \\
 & Rollout Return ($\uparrow$)                  & 5895 $\pm$ 113 & \textbf{6031 $\pm$ 89} & 5936 $\pm$ 96 & 5816 $\pm$ 224 & 5974 $\pm$ 72 \\
\midrule
\multirow{4}{*}{\textbf{Walker2d-Complex}} 
 & Dismatching Score ($\downarrow$)  & -- & \textbf{0.891} & 0.999 & 1.248 & 1.028 \\
 & Total Time (s) ($\downarrow$)                  & 0.348 & \textbf{0.079} & 4.530 & 1.339 & 0.435 \\
 & Safety Rate ($\uparrow$)     & 0.705 & 1.000 & 1.000 & 1.000 & 1.000 \\
 & Rollout Return ($\uparrow$)                  & 5895 $\pm$ 113 & \textbf{5981 $\pm$ 114} & 5961 $\pm$ 95 & 5716 $\pm$ 577 & 5783 $\pm$ 618 \\
 \midrule
\multirow{4}{*}{\textbf{HalfCheetah}} 
 & Dismatching Score ($\downarrow$) & -- & 5.256 & 3.177 & \textbf{3.099} & 4.334 \\
 & Total Time (s) ($\downarrow$)    & 0.358 & \textbf{0.089} & 10.071 & 2.334 & 0.502 \\
 & Safety Rate ($\uparrow$)      & 0.000 & 1.000 & 1.000 & 1.000 & 1.000 \\
 & Rollout Return ($\uparrow$)     & 724$\pm$375 & \textbf{2977$\pm$785} & 2034$\pm$892 & 2083$\pm$580 & 1399$\pm$309 \\
\bottomrule
\end{tabular}
}
\end{table*}

\begin{figure*}
    \centering
    \includegraphics[width=\linewidth]{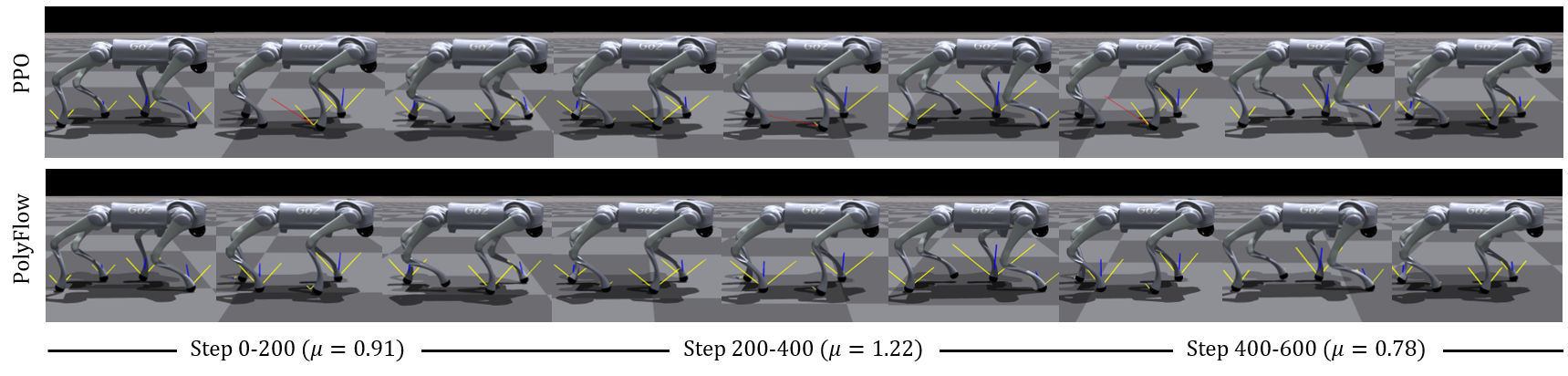}
    \caption{\textbf{Visualization of ground reaction forces in quadruped locomotion task.} Yellow lines indicate friction cone boundaries; blue and red lines correspond to safe and unsafe forces, respectively. Notably, PolyFlow strictly satisfies constraints throughout the rollout, in contrast to the expert policy which shows clear violations.}
    \label{fig:go2_capture}
\end{figure*}

\textbf{Evaluation Metrics}\quad We employ a comprehensive set of metrics (details in Appendix \ref{app:metrics}) categorized into three dimensions to rigorously evaluate PolyFlow and the baselines:

\textit{1) Safety Satisfaction.} 
We use the \textbf{Safety Rate (R)} to measure the proportion of generated trajectories that strictly satisfy all imposed constraints.

\textit{2) Generative Quality.} 
To assess the fidelity of the generated distribution relative to the expert distribution, we compute the \textbf{Maximum Mean Discrepancy (MMD)}, \textbf{2-Wasserstein Distance (W)}, and \textbf{KL Divergence (KL)}. Furthermore, we evaluate the smoothness of the generated trajectories, specifically \textbf{Curvature Smoothness (Cur)} and \textbf{Acceleration Smoothness (Acc)}.

\textit{3) Inference Efficiency.} 
We assess the practical viability of each method by measuring the average total inference time per trajectory (\textbf{TotalTime}) and the average computational time required for a single sampling step (\textbf{StepTime}).

\subsection{Safe Planning in 2D Maze}
We evaluate the performance of PolyFlow against several baselines in the 2D Maze environment. Table \ref{tab:maze_compare} (a) presents the results using the default sampling steps for all baselines to demonstrate their peak performance, while Table \ref{tab:maze_compare} (b) provides a strict comparison under an identical low-step regime ($N=10$). Furthermore, we investigate two architectural variants for PolyFlow's constraint encoding block: \textit{PolyFlow-attn}, which utilizes self-attention for encoding and cross-attention for feature fusion, and \textit{PolyFlow-mlp}, which employs a MLP with pooling for feature extraction and additive fusion.

\textbf{Safety Performance}\quad As shown in Table \ref{tab:maze_compare}, both PolyFlow variants achieve strict constraint satisfaction ($100\%$ safety rate) across all settings. In contrast, all baselines fail to guarantee absolute safety even with extensive sampling steps[cite: 191]. Moreover, in the equal-step comparison (Table \ref{tab:maze_compare}b), the safety of post-hoc correction methods collapses significantly. For CBF-based approaches (SafeFlow, RoSD, ReSD, and TVSD), modeling complex mazes requires approximating obstacles with multiple super-ellipses. As the number of constraints increases, the safety controller often becomes infeasible, necessitating the use of slack variables that inherently compromise theoretical safety guarantees. PolyFlow circumvents these issues by embedding constraints directly into the architecture.

\textbf{Generational Quality}\quad PolyFlow-mlp achieves state-of-the-art performance across all distributional metrics, notably outperforming even the unconstrained Flow matching baseline. In addition, it is observed in Figure \ref{fig:maze_compare} that methods such as ReSD and FlowTrunc suffer from severe oscillations, with trajectory points clustered near obstacle boundaries. This highlights a classic "local trap" issue in post-hoc correction methods: when external forcing becomes too aggressive, it disrupts the natural ODE dynamics and traps the generation process in local minima, yielding physically non-viable paths.

\textbf{Inference Efficiency}\quad A key advantage of PolyFlow's projection-free architecture is its computational efficiency. According to Table \ref{tab:maze_compare}(b), when fixed at $10$ steps, the total inference time of PolyFlow-mlp ($0.58$s) is orders of magnitude faster than SafeFlow ($7.558$s) and nearly matches the unconstrained Flow ($0.324$s). Interestingly, the MLP-based encoder proves much more efficient than the Attention-based counterpart, reducing total time by approximately $72\%$ while simultaneously improving trajectory quality. This demonstrates that for polyhedral constraints, a simplified encoding structure is sufficient to achieve superior performance with minimal overhead, making PolyFlow-mlp highly suitable for real-time safety-critical applications.

\begin{figure}[h]
    \centering
    \includegraphics[width=0.7\linewidth]{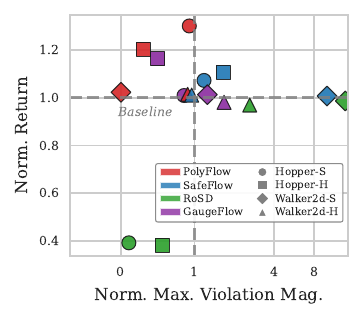}
    \caption{\textbf{Safety-return trade-off during rollouts.} This figure illustrates the Pareto frontier of different methods across \textit{Max V. Mag.}(maximum violation magnitude ratio) and \textit{Rollout Return}. Both metrics are normalized with respect to the Flow baseline. PolyFlow achieves a Pareto frontier positioned closer to the top-left compared to baselines, which indicates better safety-performance trade-off. (\textbf{Note}: For Hopper and Walker2d, constraints are applied to the states of generated trajectories instead of the executed actions. Consequently, constraint violations may occur during rollouts yet the imposed constraints provide effective guidance.)}
    \label{fig:return_mag_tradeoff}
\end{figure}

\subsection{Safe Control for Gym Tasks}

In the Gym locomotion experiments, we employ a receding horizon control framework. At each control step, the model predicts a trajectory over a future horizon, in which all generated states strictly satisfy the predefined safety constraints. 

Table~\ref{tab:gym_result} presents the quantitative results for both trajectory matching and control rollouts. PolyFlow shows superior performance across the board, achieving strict constraint satisfaction during generation while attaining the highest rollout returns and the lowest inference latency in all tasks. Regarding generative fidelity, PolyFlow also yields the best fit to the expert distribution in four out of five tasks.

Another notable observation in Table~\ref{tab:gym_result} is that constrained methods (PolyFlow and SafeFlow) occasionally outperform the unconstrained Flow baseline in terms of task return. We hypothesize that this is because low-return trajectories in the dataset often correlate with unsafe behaviors. Enforcing safety constraints therefore acts as "hard guidance", steering the model toward safer data modes that yield higher rewards.

To strictly evaluate this effect, we analyze constraint violations of the actual executed trajectories (rollouts), as shown in Table~\ref{tab:rollout_metrics} in the Appendix. Since constraints are enforced on the generated states rather than directly on control actions, minor violations may occur during physical execution. However, constrained methods substantially reduce both the magnitude and duration of these violations. Among all methods, PolyFlow achieves the most favorable trade-off, securing the lowest constraint violation magnitude while maintaining the highest task return, as shown in Figure~\ref{fig:return_mag_tradeoff}.

\subsection{Safe control for Quadrupedal Locomotion}

We further evaluate the performance of PolyFlow under highly dynamic and time-varying safety constraints. As discussed in Appendix \ref{app:isaac_gym_locomotion}, mapping the friction pyramid to action space introduces state-dependent variations (illustrated in Figure \ref{fig:go2_polytope}), making the feasible polytope naturally time-varying as the robot moves. In addition, we dynamically randomize the friction coefficient $\mu$ during rollouts to simulate abrupt transitions between terrain surfaces. 

Figure \ref{fig:go2_capture} and Figure \ref{fig:go2_grfs} visualize the ground reaction forces (GRFs) alongside the corresponding friction cone boundaries as the robot traverses terrains with varying friction. 
The induction of the proposed constraint embedding mechanism effectively achieves strong constraint satisfaction and zero-shot adaptation to dynamic safety requirements. More details are provided in Appendix \ref{app:isaac_gym_locomotion}.





\subsection{Ablation Study}
\label{sec:ablation_main}

We conduct extensive ablation studies to evaluate the critical components of PolyFlow (detailed results are provided in Appendix \ref{app:abation}).

\textbf{Necessity of constraint embedding.} To clarify the role of the constraint encoding block, we compared our default cross-attention architecture with an MLP-based encoder and a baseline without explicit constraint embedding (w/o). For tasks with static constraints (e.g., Maze2d, HalfCheetah), the "w/o" baseline performs surprisingly well, achieving performance comparable to other architectures. We hypothesize that this is because the model implicitly learns the fixed safety boundaries from the state distribution. However, in the Unitree Go2 task, where constraints are highly state-dependent and time-varying, explicit constraint encoding becomes essential, and the "w/o" baseline suffers from a significant performance drop. 

\textbf{Coupling between weight predictor and direction predictor.} We investigate whether the weight predictor $\gamma_\theta$ needs to be explicitly conditioned on the output of the direction predictor $d_\theta$. Comparing our decoupled architecture against a coupled variant shows only marginal differences in overall performance. This suggests that the weight predictor can implicitly infer directional context from the shared latent state and constraint embeddings without explicit direction input.

\textbf{Ray shooting operator.} We compare the standard exact minimization (Hard) operation in ray shooting against smoothed approximations (Softmin and Boltzmann). The results demonstrate that the direct exact minimization yields superior performance and strict safety, as it provides precise boundary estimations which are crucial when optimal actions lie on constraint boundaries.

\textbf{OT-based batch coupling.} Incorporating Optimal Transport (OT) data coupling within training batches further enhances rollout performance. This empirical finding aligns with Theorem \ref{thm:error_bound}, suggesting that OT matching effectively reduces the variance of the approximation error induced by discretization.

\textbf{Integration steps.} We evaluate PolyFlow across varying sample steps. PolyFlow maintains high reward returns even in extremely low sampling steps ($N=2$). Crucially, PolyFlow maintains perfect safety across all step sizes, indicating strong safety robustness.

\section{Conclusion}
In this work, we introduce PolyFlow, a sample-efficient flow matching framework designed to provide strict safety guarantees for constrained generative modeling. By reformulating flow dynamics in discrete time and incorporating a projection-free architecture, the proposed method eliminates numerical discretization errors and the computational burden of post-hoc projections. 

While PolyFlow demonstrates strong performance, it is subject to several limitations. Currently, the framework requires constraints to be formulated as convex polytopes. Additionally, it necessitates that the initial distribution resides within the feasible region, which incurs additional computational overhead during sampling. Further generalizing PolyFlow to non-convex constraint geometries—such as through convex decomposition strategies—remains an important and promising direction for future work.





\section*{Acknowledgements}

This work is supported by New Generation Artificial Intelligence-National Science and Technology Major Project (No. 2025ZD0122901) and the National Natural Science Foundation of China (Grant No. 62373242).


\section*{Impact Statement}


This paper presents work whose goal is to advance the field of Machine
Learning. There are many potential societal consequences of our work, none
which we feel must be specifically highlighted here.



\bibliography{example_paper}
\bibliographystyle{icml2026}

\newpage
\appendix
\onecolumn

\section{Preliminary: Projection-Free Constrained Optimization} \label{app:fw}




Given a compact convex safety set $\mathcal{X}_c \subset \mathbb{R}^d$, our objective is to ensure that the final generated sample $x_1$ resides within this safe region.
This can be formulated as a constrained optimization problem:
\begin{equation}
    x^* = \min_{x \in \mathcal{X}_c} f(x).
\end{equation}
Classic projection-based methods, which alternate between gradient descent and Euclidean projection, become computationally infeasible in high-dimensional spaces due to the prohibitive cost of the projection operator. This bottleneck motivates the use of projection-free approaches, most notably the Frank-Wolfe (FW) algorithm \cite{FW}.

This algorithm circumvents the projection step by accessing the constraint set $\mathcal{X}_c$ solely through a Linear Minimization Oracle (LMO). The LMO solves a linear subproblem to identify an extreme point (vertex) $v^*$ that minimizes the inner product with the current gradient $g$:
\begin{equation}
    v^* = \arg\min_{v \in \mathcal{X}_c} \langle g, v \rangle.
\end{equation}
Geometrically, $v^*$ represents the vertex with the maximum projection along the negative gradient direction. The algorithm then updates the iterate along the direction $d^k = v^* - x^k$:
\begin{equation}
    x^{k+1} = x^k + \gamma^k d^k = (1 - \gamma^k) x^k + \gamma^k v^*,
\end{equation}
where $\gamma^k \in (0, 1)$ is the step size. By convexity of $\mathcal{X}_c$, the convex combination guarantees $x^{k+1} \in \mathcal{X}_c$. Consequently, the solution $x^k$ can be expressed as a sparse convex combination of the initial point and the history of the discovered vertices: $x^k = w^0 x^0 + \sum_{i=1}^k w^i v^i$.

However, the standard FW algorithm suffers from the monotonic expansion of the \textit{active set} (the set of vertex representing the solution), leading to memory inefficiency and the "zig-zagging" phenomenon that hinders convergence. To mitigate this, Away-step FW\cite{awayFW} incorporates an active set maintenance mechanism. This involves dynamically adding new promising atoms while pruning suboptimal ones (via away-steps, for instance), thereby keeping the representation sparse. The state is thus parameterized by a dynamic vertex set $\mathcal{V}_k$:
\begin{gather}
    \mathcal{V}_k = \text{Update}(\mathcal{V}_{k-1}), \\
    x^k = x^0 + \sum_{v \in \mathcal{V}_k} w_v v, \quad \text{s.t.} \; \sum_{v \in \mathcal{V}_k} w_v = 1, \; w_v > 0.
\end{gather}

\section{Theorem Proofs}
\label{app:error_bound_proofs}

In this section, we provide the detailed derivation of the discretization error bound presented in Theorem \ref{thm:error_bound}. We aim to quantify the deviation between the approximate distribution $\tilde{p}_t$ generated by the marginal discrete vector field and the true marginal distribution $p_t$. We establish a recursive upper bound on the error using the $2$-Wasserstein distance.

\subsection{Proof of Theorem \ref{thm:error_bound}}

First, we formally define the $2$-Wasserstein distance.

\begin{definition}[Wasserstein-2 Distance]
Let $\mathcal{P}_2(\mathbb{R}^d)$ be the space of probability measures on $\mathbb{R}^d$ with finite second moments. For any two distributions $\mu, \nu \in \mathcal{P}_2(\mathbb{R}^d)$, the Wasserstein-2 distance is defined as:
\begin{equation}
    W_2(\mu, \nu) \triangleq \left( \inf_{\pi \in \Pi(\mu, \nu)} \int_{\mathbb{R}^d \times \mathbb{R}^d} \|x - y\|^2 d\pi(x, y) \right)^{1/2},
\end{equation}
where $\Pi(\mu, \nu)$ denotes the set of all joint distributions (couplings) with marginals $\mu$ and $\nu$. By definition, for any specific coupling $(X, Y) \sim \pi$, inequality $W_2(\mu, \nu) \leq \sqrt{\mathbb{E}[\|X - Y\|^2]}$ holds.
\end{definition}

\paragraph{Dynamics Setup.}
Consider the transition from time step $t$ to $t+1$. 
\begin{itemize}
    \item \textbf{Reference Process:} Follows the conditional vector field $u_t(x|z)$.
    \begin{equation}
        x_{t+1} = x_t + u_t(x_t|z), \quad x_t \sim p_t, \; z \sim p(z|x_t).
    \end{equation}
    \item \textbf{Approximate Process:} Follows the marginal expectation field $\tilde{u}_t(x) = \mathbb{E}_{z|x}[u_t(x|z)]$.
    \begin{equation}
        \tilde{x}_{t+1} = \tilde{x}_t + \tilde{u}_t(\tilde{x}_t), \quad \tilde{x}_t \sim \tilde{p}_t.
    \end{equation}
\end{itemize}

To derive the recursive bound, we first decompose the error into "transmission error" and "matching error" using Minkowski's inequality.

\begin{lemma}[Error Decomposition]
\label{lemma:error_decomp}
Let $L^2(\Omega)$ denote the space of random variables with finite second moments. For random variables $x_{t+1}$ and $\tilde{x}_{t+1}$ at time $t+1$, their $L^2$ distance satisfies:
\begin{equation}
    \sqrt{\mathbb{E}\|x_{t+1} - \tilde{x}_{t+1}\|^2} \leq \sqrt{\mathbb{E}\| \mathcal{T}(x_t) - \mathcal{T}(\tilde{x}_t) \|^2} + \sqrt{\mathbb{E}\| u_t(x_t|z) - \tilde{u}_t(x_t) \|^2},
\end{equation}
where we define the transition mapping $\mathcal{T}(x) \triangleq x + \tilde{u}_t(x)$.
\end{lemma}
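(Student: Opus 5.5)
The plan is to couple the reference and approximate processes on a common probability space and rewrite the difference so that Minkowski's inequality in $L^2(\Omega)$ applies directly. Take a joint realization $(x_t,\tilde{x}_t,z)$, where $(x_t,\tilde{x}_t)$ is any coupling of $p_t$ and $\tilde{p}_t$ and $z\sim p(z\mid x_t)$ is drawn conditionally on $x_t$ only. Then define $x_{t+1}=x_t+u_t(x_t\mid z)$ and $\tilde{x}_{t+1}=\mathcal{T}(\tilde{x}_t)$. Adding and subtracting $\tilde{u}_t(x_t)$ gives the algebraic identity
\begin{equation*}
x_{t+1}-\tilde{x}_{t+1} = \bigl(\mathcal{T}(x_t)-\mathcal{T}(\tilde{x}_t)\bigr) + \bigl(u_t(x_t\mid z)-\tilde{u}_t(x_t)\bigr).
\end{equation*}
This holds pointwise because $x_t+u_t(x_t\mid z) = \mathcal{T}(x_t) + u_t(x_t\mid z) - \tilde{u}_t(x_t)$.

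Next I apply the triangle inequality for the norm $\|X\|_{L^2}=\sqrt{\mathbb{E}\|X\|^2}$ on $\mathbb{R}^d$-valued random variables, which is Minkowski's inequality. This yields exactly the claimed bound, with the first term the transmission error and the second the matching error.

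The only real care needed is integrability: both summands must lie in $L^2(\Omega)$, or else the right-hand side is $+\infty$ and the inequality is trivial. For the second summand, I note that $\mathbb{E}\|u_t(x_t\mid z)-\tilde{u}_t(x_t)\|^2$ equals $\mathcal{E}_{match}(t)^2$ after rewriting the joint law of $(x_t,z)$ as $z\sim p_Z$, $x_t\sim p_{t|Z}$ via Bayes. For the first summand, finiteness follows from the $L$-Lipschitz assumption on $\tilde{u}_t$ together with finite second moments of $p_t$ and $\tilde{p}_t$.

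The main (mild) obstacle is making sure the coupling is constructed so that $z$ has the correct conditional law given $x_t$ even while $x_t$ is jointly coupled with $\tilde{x}_t$. I would handle this by first choosing the coupling of $(x_t,\tilde{x}_t)$ and then sampling $z$ from $p(\cdot\mid x_t)$, independently of $\tilde{x}_t$ given $x_t$. This keeps the marginal of $x_{t+1}$ equal to $p_{t+1}$ and the marginal of $\tilde{x}_{t+1}$ equal to $\tilde{p}_{t+1}$, which is what the subsequent step toward Theorem~\ref{thm:error_bound} needs.
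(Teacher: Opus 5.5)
Your proposal is correct and follows the paper's proof: you add and subtract $\tilde{u}_t(x_t)$ to split $x_{t+1}-\tilde{x}_{t+1}$ into the transmission term $\mathcal{T}(x_t)-\mathcal{T}(\tilde{x}_t)$ and the matching term $u_t(x_t\mid z)-\tilde{u}_t(x_t)$, then apply Minkowski's inequality in $L^2(\Omega)$. Your integrability check is unnecessary, since Minkowski holds for $[0,\infty]$-valued norms and the Lipschitz assumption is not a hypothesis of this lemma; your coupling construction (drawing $z$ from $p(\cdot\mid x_t)$ independently of $\tilde{x}_t$) makes explicit what the paper leaves implicit when it applies this lemma in Theorem~\ref{thm:error_bound}.
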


\begin{proof}
We construct the difference between $x_{t+1}$ and $\tilde{x}_{t+1}$:
\begin{align}
    x_{t+1} - \tilde{x}_{t+1} &= (x_t + u_t(x_t|z)) - (\tilde{x}_t + \tilde{u}_t(\tilde{x}_t)) \\
    &= \underbrace{(x_t + \tilde{u}_t(x_t)) - (\tilde{x}_t + \tilde{u}_t(\tilde{x}_t))}_{A} + \underbrace{(u_t(x_t|z) - \tilde{u}_t(x_t))}_{B}.
\end{align}
Here, term $A$ represents the discrepancy caused by the deterministic map $\mathcal{T}$, and term $B$ represents the difference between the conditional field and the marginal field.
Applying Minkowski's inequality (the triangle inequality for the $L^2$ norm, $\|A+B\|_2 \leq \|A\|_2 + \|B\|_2$):
\begin{equation}
    \sqrt{\mathbb{E}\|x_{t+1} - \tilde{x}_{t+1}\|^2} \leq \sqrt{\mathbb{E}\|A\|^2} + \sqrt{\mathbb{E}\|B\|^2}.
\end{equation}
Substituting the definitions of $A$ and $B$ concludes the proof.
\end{proof}

\begin{lemma}[Lipschitz Transmission Bound]
\label{lemma:lipschitz}
Assume the marginal expectation field $\tilde{u}_t(x)$ is $L$-Lipschitz continuous, i.e., $\|\tilde{u}_t(x) - \tilde{u}_t(y)\| \leq L\|x - y\|$. Then the map $\mathcal{T}(x) = x + \tilde{u}_t(x)$ is $(1+L)$-Lipschitz, and:
\begin{equation}
    \sqrt{\mathbb{E}\| \mathcal{T}(x_t) - \mathcal{T}(\tilde{x}_t) \|^2} \leq (1+L) \sqrt{\mathbb{E}\| x_t - \tilde{x}_t \|^2}.
\end{equation}
\end{lemma}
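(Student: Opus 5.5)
The plan is to prove the two claims of the lemma in sequence. First we show a pointwise Lipschitz bound for $\mathcal{T}$. Then we lift it to the $L^2$ norm of random variables by monotonicity of expectation.

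For the pointwise step, fix arbitrary $x,y\in\mathbb{R}^d$ and write
\[
\mathcal{T}(x)-\mathcal{T}(y) = (x-y) + (\tilde{u}_t(x)-\tilde{u}_t(y)).
\]
The triangle inequality for the Euclidean norm, together with the assumed $L$-Lipschitz property of $\tilde{u}_t$, gives
\[
\|\mathcal{T}(x)-\mathcal{T}(y)\| \le \|x-y\| + L\|x-y\| = (1+L)\|x-y\|.
\]
This is the claimed $(1+L)$-Lipschitz continuity of $\mathcal{T}$.

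For the second step, apply this bound to the (jointly distributed) random variables $x_t$ and $\tilde{x}_t$. These live on the same probability space via whatever coupling is used in Lemma~\ref{lemma:error_decomp}. Squaring the pointwise bound gives
\[
\|\mathcal{T}(x_t)-\mathcal{T}(\tilde{x}_t)\|^2 \le (1+L)^2\|x_t-\tilde{x}_t\|^2
\]
almost surely. Taking expectations preserves the inequality by monotonicity. Taking square roots, with $1+L\ge 0$, yields
\[
\sqrt{\mathbb{E}\|\mathcal{T}(x_t)-\mathcal{T}(\tilde{x}_t)\|^2} \le (1+L)\sqrt{\mathbb{E}\|x_t-\tilde{x}_t\|^2}.
\]

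The only point needing care is finiteness and measurability of the expectations. If $x_t,\tilde{x}_t$ have finite second moments, then $\mathcal{T}(x_t)$ and $\mathcal{T}(\tilde{x}_t)$ do too. This holds because $\|\mathcal{T}(x)\|\le \|\mathcal{T}(0)\|+(1+L)\|x\|$, and $\mathcal{T}$ is continuous and hence Borel measurable. I therefore do not expect a genuine obstacle: the lemma is essentially immediate from the triangle inequality. The substantive use comes afterward. One chooses $(x_t,\tilde{x}_t)$ as an optimal $W_2$ coupling of $p_t$ and $\tilde{p}_t$ and combines this lemma with Lemma~\ref{lemma:error_decomp} to obtain Theorem~\ref{thm:error_bound}.
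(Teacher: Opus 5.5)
Your proof is correct and follows the paper's own argument: the pointwise triangle inequality with the $L$-Lipschitz assumption gives $\|\mathcal{T}(x)-\mathcal{T}(y)\|\le(1+L)\|x-y\|$, and squaring, taking expectations and taking square roots gives the $L^2$ bound. Your extra checks on measurability of $\mathcal{T}$ and finiteness of second moments are a harmless refinement that the paper leaves implicit.
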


\begin{proof}
Using the triangle inequality and the Lipschitz assumption:
\begin{align}
    \| \mathcal{T}(x_t) - \mathcal{T}(\tilde{x}_t) \| &= \| (x_t - \tilde{x}_t) + (\tilde{u}_t(x_t) - \tilde{u}_t(\tilde{x}_t)) \| \\
    &\leq \| x_t - \tilde{x}_t \| + \| \tilde{u}_t(x_t) - \tilde{u}_t(\tilde{x}_t) \| \\
    &\leq \| x_t - \tilde{x}_t \| + L \| x_t - \tilde{x}_t \| \\
    &= (1+L) \| x_t - \tilde{x}_t \|.
\end{align}
Squaring both sides, taking the expectation, and applying the square root yields the result.
\end{proof}

Based on the lemmas above, we now prove the recursive upper bound for the Wasserstein error.

\textbf{Theorem \ref{thm:error_bound} (Restated).}
\textit{Assume that the marginal expectation field $\tilde{u}_t(x)$ is $L$-Lipschitz continuous. Let $W_2(p_t, \tilde{p}_t) = \sqrt{\mathbb{E}\|x_t - \tilde{x}_t\|^2}$ be the $2$-Wasserstein distance between the true marginal $p_t$ and the approximate marginal $\tilde{p}_t$. Then at step $t+1$:}
\begin{equation}
    W_2(p_{t+1}, \tilde{p}_{t+1}) \leq (1+L) W_2(p_t, \tilde{p}_t) + \sqrt{ \mathbb{E}_{z} \mathbb{E}_{x_t\sim p_{t|Z}} [\| \tilde{u}_t(x_t) - u_t(x_t|z)  \|^2] }.
\end{equation}

\begin{proof}
1. \textbf{Construct Optimal Coupling:}
Assume that at time $t$, the pair of random variables $(x_t, \tilde{x}_t)$ follows the optimal coupling between $p_t$ and $\tilde{p}_t$, such that $\sqrt{\mathbb{E}\|x_t - \tilde{x}_t\|^2} = W_2(p_t, \tilde{p}_t)$.

2. \textbf{Apply Error Decomposition:}
Let $(x_{t+1}, \tilde{x}_{t+1})$ evolve according to the dynamics described in the definitions. By the definition of the Wasserstein distance (as the infimum over all couplings):
\begin{equation}
    W_2(p_{t+1}, \tilde{p}_{t+1}) \leq \sqrt{\mathbb{E}\|x_{t+1} - \tilde{x}_{t+1}\|^2}.
\end{equation}
Using Lemma \ref{lemma:error_decomp}, we expand the right-hand side:
\begin{equation}
    W_2(p_{t+1}, \tilde{p}_{t+1}) \leq \sqrt{\mathbb{E}\| \mathcal{T}(x_t) - \mathcal{T}(\tilde{x}_t) \|^2} + \sqrt{\mathbb{E}\| u_t(x_t|z) - \tilde{u}_t(x_t) \|^2}.
\end{equation}

3. \textbf{Apply Lipschitz Bound:}
Substituting Lemma \ref{lemma:lipschitz} into the first term:
\begin{align}
    W_2(p_{t+1}, \tilde{p}_{t+1}) &\leq (1+L) \sqrt{\mathbb{E}\| x_t - \tilde{x}_t \|^2} + \sqrt{ \mathbb{E}_{z} \mathbb{E}_{x_t\sim p_{t|Z}} [\| \tilde{u}_t(x_t) - u_t(x_t|z)  \|^2] } \\
    &= (1+L) W_2(p_t, \tilde{p}_t) + \sqrt{ \mathbb{E}_{z} \mathbb{E}_{x_t\sim p_{t|Z}} [\| \tilde{u}_t(x_t) - u_t(x_t|z)  \|^2] }.
\end{align}
The final equality holds by the assumption of the optimal coupling at time $t$. Note that the second term represents the inherent variance of the conditional vector field, which corresponds to the irreducible error of the flow matching objective.
\end{proof}

\subsection{Proof of Theorem \ref{thm:safety}}
\textbf{Theorem \ref{thm:safety} (Restated).}
Let $\mathcal{X}_c$ be a convex set. If the conditional flow is interior safe (i.e., $x + u_t(x|z) \in \mathcal{X}_c$ for all $x \in \mathcal{X}_c, z$), then the marginal expectation field $u_t^p(x)$ is also interior safe.

\begin{proof}
By the definition of marginal expectation field $u_t^p(x)$, the next state in the marginal discrete flow, denoted as $\tilde{x}_{t+1}$, is given by: 
\begin{equation} \tilde{x}_{t+1} = x_t + u_t^p(x_t) = x_t + \mathbb{E}_{z \sim p_{t|Z}(\cdot|z)} [u_t(x_t|z)]. 
\end{equation}
Using the linearity of the expectation operator, we can move the deterministic term $x_t$ inside the expectation:
\begin{equation} 
\tilde{x}_{t+1} = \mathbb{E}_{z \sim p_{Z|t}(\cdot|x_t)} [x_t + u_t(x_t|z)]. 
\end{equation}
Let $\omega_t(z) =x_t + u_t(x_t|z)$. According to the assumption of conditional interior safety, we have $\omega_t(z) \in \mathcal{X}_c$ for all $z$ in the support of the posterior distribution $p_{Z|t}(\cdot|x_t)$.
Since $\mathcal{X}_c$ is a convex set, it is closed under convex combinations (and by extension, expectations). Therefore, the expectation of the random variable $\omega_t(z)$, which takes values strictly within $\mathcal{X}_c$, must also reside within $\mathcal{X}_c$:
\[
\tilde{x}_{t+1}
= \mathbb{E}_{z\sim p_{Z|t}}[\omega_t(z)]
\in \mathcal{X}_c.
\]
Hence, for any $x_t \in \mathcal{X}_c$, the marginal update $\tilde{x}_{t+1}$ remains in $\mathcal{X}_c$, which proves that the marginal flow is interior safe.
\end{proof}

\section{Ray Shooting Operator Details} \label{app:ray_shooting}
The Ray Shooting operator $\text{RS}(x, d): \mathcal{X}_c \times \mathbb{R}^d \to \partial\mathcal{X}_c$ computes the intersection point between a ray originating at $x$ with direction $d$ and the boundary of the convex polytope $\mathcal{X}_c$.

Let the safety polytope be defined by $m$ linear inequality constraints:
\begin{equation}
    \mathcal{X}_c = \{ z \in \mathbb{R}^n \mid a_i^\top z \leq b_i, \; \forall i \in \{1, \dots, m\} \},
\end{equation}
where $a_i \in \mathbb{R}^n$ and $b_i \in \mathbb{R}$ define the normal vector and offset of the $i$-th hyperplane, respectively.

Given a current feasible state $x_t \in \mathcal{X}_c$ and a predicted direction vector $d \in \mathbb{R}^n$, the ray can be parameterized as $r(\lambda) = x_t + \lambda d$ for $\lambda \geq 0$. To find the intersection with the boundary $\partial\mathcal{X}_c$, we seek the maximum step size $\lambda^*$ such that the point remains feasible.

Substituting the ray equation into the constraints:
\begin{equation}
    a_i^\top (x_t + \lambda d) \leq b_i \implies \lambda (a_i^\top d) \leq b_i - a_i^\top x_t.
\end{equation}
Let $s_i = b_i - a_i^\top x_t$ denote the "slack" of the $i$-th constraint at the current state. Since $x_t$ is feasible, $s_i \geq 0$. We analyze the term $v_i = a_i^\top d$:
\begin{itemize}
    \item If $v_i \leq 0$, the ray is moving parallel to or away from the boundary of the $i$-th constraint. Since the constraint is currently satisfied, moving in this direction will never violate it.
    \item If $v_i > 0$, the ray is moving towards the boundary. The constraint will be violated if $\lambda > s_i / v_i$.
\end{itemize}

Since the polytope is compact, the ray must eventually intersect at least one boundary. The intersection occurs at the smallest positive $\lambda$ imposed by any constraint moving towards the boundary:
\begin{equation}\label{eq:ray_shooting_min}
    \lambda^* = \min_{\{i \mid a_i^\top d > 0\}} \left( \frac{b_i - a_i^\top x_t}{a_i^\top d} \right).
\end{equation}
Finally, the intersection point $z^*$ is computed as:
\begin{equation}
    z^* = \text{RS}(x_t, d) = x_t + \lambda^* d.
\end{equation}

\paragraph{Differentiability:} The scalar $\lambda^*$ is the minimum of a finite set of rational functions. The operation is differentiable almost everywhere with respect to both $x_t$ and $d$, except at the specific manifold where the active constraint set changes (i.e., when the ray hits exactly the intersection of two faces). In deep learning practice, this behaves similarly to the $\max$ or $\text{ReLU}$ operations, allowing gradients to propagate back through the parameters of the active constraint $k$ (where $k = \arg\min (\dots)$) to update the direction network $d_\theta$. The gradient is given by:
\begin{equation}
    \nabla_{d} z^* = \lambda^* I + d (\nabla_d \lambda^*)^\top, \quad \text{where } \nabla_d \lambda^* = - \frac{\lambda^*}{a_k^\top d} a_k.
\end{equation}
This ensures that the entire PolyFlow framework remains fully differentiable and end-to-end trainable.

\begin{algorithm}[t]
   \caption{PolyFlow Sampling (Inference)}
   \label{alg:polyflow_sample}
\begin{algorithmic}[1]
   \STATE {\bfseries Input:} Initial sample $x_0 \sim p$, Trained models $\theta$.
   \STATE {\bfseries Constraint:} Safety Polytope $\mathcal{X}_c$.
   
   \FOR{$t=0$ {\bfseries to} $T-1$}
       \STATE $d_t \leftarrow d_\theta(x_t, t, \mathcal{X}_c)$
       \STATE $\gamma_t \leftarrow \gamma_\theta(x_t, t, \mathcal{X}_c)$
       
       \STATE $z^*_t \leftarrow \text{RS}(x_t, d_t,\mathcal{X}_c)$ \hfill $\triangleright$ \textit{Find valid boundary target}
       
       \STATE $x_{t+1} \leftarrow x_t + \gamma_t \cdot (z^*_t - x_t)$ 
       
       \STATE \textit{// Note: } $x_{t+1} \in \mathcal{X}_c$ \textit{ is guaranteed by convexity.}
   \ENDFOR
   \STATE {\bfseries Return} generated sample $x_T$
\end{algorithmic}
\end{algorithm}


\section{Baselines} \label{app:baselines}
\begin{itemize}
    \item \textbf{Flow}: We adopt an unconstrained flow matching model trained along the optimal transport path. To ensure architectural consistency across methods, all flow-based models in our experiments (including \textbf{Flow}, \textbf{FlowTrunc}, \textbf{SafeFlow}, and \textbf{GaugeFlow}) share DiT \cite{peebles2023scalable} as the network backbone.

    \item \textbf{FlowTrunc}: This baseline is a direct constrained variant of Flow, which enforces constraints through a hard projection strategy during sampling. Specifically, after each numerical integration step, any trajectory point that violates the safety constraints is projected back onto the feasible region.
    
    \item \textbf{SafeFlow} \cite{dai2025safe}: This method represents a state-of-the-art approach among CBF-based flow models. During the sampling process, it introduces an auxiliary control term at each step to enforce safety constraints, which requires solving QP problems When more than two CBF constraints are present. In our implementation, the \texttt{qpth} library is used to accelerate QP solving via parallelization.
    
    \item \textbf{SafeDiffuser} \cite{xiao2023safediffuser}: While its constraint enforcement mechanism is conceptually similar to that of SafeFlow, Safediffuser is fundamentally built upon diffusion models rather than flow matching. It includes three variants: RoSD, ReSD, and TVSD. In this work, We directly adopt the official model architectures and hyperparameters released by the authors.
    
    \item \textbf{GaugeFlow} \cite{li2025gauge}: This method employs a gauge transformation that maps the convex feasible region to a unit ball, within which flow matching is performed. Points that deviate from the unit ball are corrected via a reflection mechanism. GaugeFlow requires a fixed interior point within the feasible region; here, we select the center of the Chebyshev ball. Due to its inherent limitation to static constraints, GaugeFlow is evaluated exclusively on the Gym tasks.
\end{itemize}

\begin{figure}[h]
    \centering
    \includegraphics[width=0.7\linewidth]{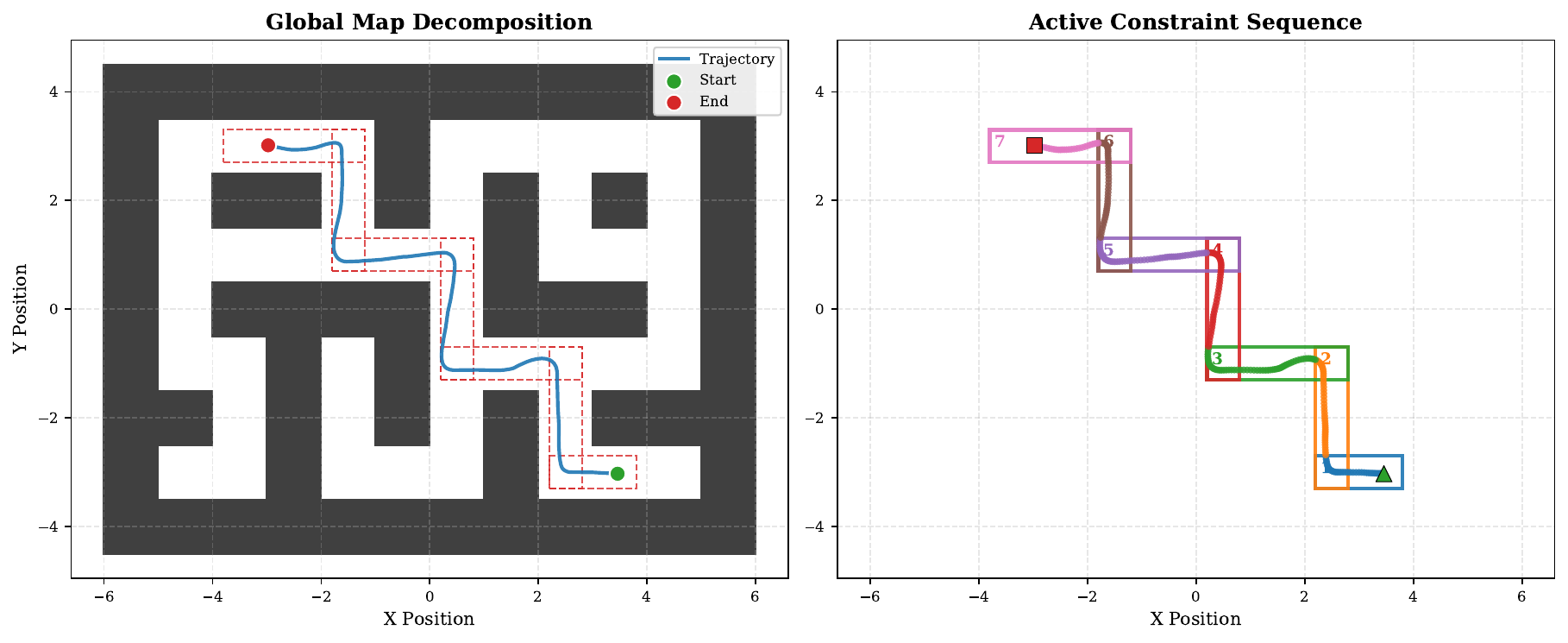}
    \caption{\textbf{Visualization of geometric constraint decomposition}. (
    Left: global maze environment showing the reference trajectory (blue) from start (green) to goal (red). The free space is decomposed into a set of maximal empty rectangles (dashed outlines). Right: the resulting sequence of active linear constraints derived via the greedy allocation strategy. The trajectory is segmented into distinct phases (labeled 1-7), with each segment confined to a specific maximal rectangle (solid colored boxes) that maximizes the covered future horizon.}
    \label{fig:convex_decomp}
\end{figure}

\section{Experiment Metrics.} \label{app:metrics}
We evaluate the proposed method and all baselines from three primary perspectives: safety satisfaction, distribution matching fidelity, and trajectory smoothness. The evaluation metrics are defined as follows:

\begin{itemize}
    \item \textbf{Safety Rate (R):} Safety Rate measures the proportion of generated trajectories that strictly satisfy all safety constraints. Formally, $R$ is defined as the ratio between the number of feasible trajectories and the total number of generated trajectories. A value of $R=1$ indicates perfect constraint satisfaction.
    
    \item \textbf{Maximum Mean Discrepancy (MMD):} To evaluate the distribution mismatch, we compute the MMD using a Gaussian radial basis function (RBF) kernel. The kernel bandwidth $\sigma$ is determined dynamically using the median heuristic on the combined distance matrix of generated and ground-truth samples.
    
    \item \textbf{Wasserstein Distance ($W_2$):} We measure the geometric discrepancy between generated and ground-truth distributions using the 2-Wasserstein distance. The distance is computed via the POT library~\cite{flamary2021pot}, with the squared Euclidean distance as the cost metric.
    
    \item \textbf{KL Divergence (KL):} We approximate the Kullback-Leibler divergence $D_{KL}(P_{\text{true}} || Q_{\text{gen}})$ using Gaussian Kernel Density Estimation (KDE). To ensure numerical stability and prevent singular matrices during density evaluation, a small Gaussian jitter ($\epsilon=10^{-5}$) is added to the data.
    
    \item \textbf{Curvature Smoothness (Cur):} This metric quantifies the average rate of change in curvature along the generated trajectories. Lower values indicate smoother geometrically paths.
    
    \item \textbf{Acceleration Smoothness (Acc):} Acceleration Smoothness measures the average rate of change in acceleration (analogous to jerk in dynamical systems). Lower values correspond to smoother trajectory dynamics.

    \item \textbf{TotalTime:} This metric records the wall-clock time required to generate a full batch of trajectories. To ensure fair comparison and reduce measurement variance, all methods are evaluated with a fixed batch size of 200. Consequently, TotalTime reports the average duration required to generate a batch of 200 samples.
    
    \item \textbf{StepTime:} This represents the average computation time per sampling step. Since different methods may utilize varying numbers of integration steps, \textbf{TotalTime} serves as the primary indicator for comparing the overall inference efficiency.
\end{itemize}





\section{Safe Planning in Maze Environment}
\label{app:maze2d}
\paragraph{Specifications}
The maze layout used in this experiment is illustrated in Figure \ref{fig:maze_compare}. Its geometry follows the \texttt{PointMaze\_Large-v3} environment from the gymnasium-robotics library \cite{gymnasium_robotics2023github}, with one critical modification: \textbf{the width of all corridors is narrowed by $0.4\,\text{m}$}. This adjustment increases the difficulty of the navigation task by tightening the geometric constraints and increasing the risk of collision. To facilitate controlled comparisons across methods, the start position is fixed at the bottom-right corner of the maze, while the goal position is fixed at the top-left corner. To introduce variability, we apply a uniform perturbation of range $[-0.25, 0.25]\,\text{m}$ to both the start and goal coordinates. The objective is to generate collision-free paths connecting the start to the goal positions.

Different methods require different formulations of safety constraints. For CBF-based baselines (e.g., SafeFlow, RoSD), manually designing valid CBFs for complex environments remains a significant challenge. This difficulty arises from two factors: (1) CBF methods theoretically require a continuous superlevel set to define the feasible region, whereas physical boundaries in maze environments are typically discontinuous, necessitating approximation; (2) The safety control set is theoretically guaranteed to be non-empty only under single constraints; as the number of constraints increases (e.g., multiple walls), the intersection of safe sets may become empty, requiring the introduction of slack variables. In our experiment, we approximate the rectangular obstacles using 4th-order super-ellipses to ensure differentiability:
\begin{gather}
    \left(\frac{x-x_c}{a}\right)^4 + \left(\frac{y-y_c}{b}\right)^4 - r \geq 0,
\end{gather}
where $(x_c, y_c)$ denotes the center of the obstacle, $a$ and $b$ represent the semi-axis lengths, and $r$ is a scaling radius. We defined $N$ such CBFs to represent the maze boundaries, as illustrated in Figure \ref{fig:maze_compare}.

For our PolyFlow method, strict safety relies on convex polytope feasible regions. Given that the global free space of the maze is non-convex, we employ a local convex decomposition strategy. We decompose the free space into a set of Maximal Empty Rectangles (MERs). Subsequently, we assign each trajectory point in the dataset to a specific local constraint by greedily identifying the nearest feasible rectangle (See Algorithm \ref{alg:constraint_decomposition}). Consequently, every point in the dataset is associated with a local rectangular constraint, as shown in Figure \ref{fig:convex_decomp}.

\begin{algorithm}[tb]
   \caption{Trajectory-Constraint Decomposition via Maximal Empty Rectangles}
   \label{alg:constraint_decomposition}
\begin{algorithmic}[1]
   \STATE {\bfseries Input:} Maze Map $\mathcal{M}$, Trajectory $\tau = \{s_0, s_1, \dots, s_T\}$ where $s_t \in \mathbb{R}^2$
   \STATE {\bfseries Output:} Sequence of Linear Constraints $\{(A_t, b_t)\}_{t=0}^T$
   
   \STATE $\mathcal{R}_{\text{max}} \leftarrow \textsc{FindMaximalRectangles}(\mathcal{M})$ $\quad\triangleright$\textit{Pre-process: Find all maximal empty rectangles in the grid map}
   
   \STATE Initialize time index $t \leftarrow 0$
   \WHILE{$t < T$}
       \STATE $\mathcal{C}_t \leftarrow \{ R \in \mathcal{R}_{\text{max}} \mid s_t \in R \}$  $\quad\triangleright$\textit{Find all valid rectangles containing current state $s_t$}
       
       \IF{$\mathcal{C}_t = \emptyset$}
           \STATE $R^* \leftarrow \text{FindNearestRect}(s_t, \mathcal{R}_{\text{max}})$ $\quad\triangleright$\textit{Handle outliers}
           \STATE $k^* \leftarrow t + 1$
       \ELSE
           \STATE \textit{ \# Greedy Selection: Choose rect that covers the longest future horizon}
           \STATE $k_{\text{best}} \leftarrow -1$
           \STATE $R^* \leftarrow \text{null}$
           
           \FORALL{$R \in \mathcal{C}_t$}
               \STATE Find largest $k \in [t, T]$ such that $\forall j \in [t, k], s_j \in R$
               \IF{$k > k_{\text{best}}$}
                   \STATE $k_{\text{best}} \leftarrow k$
                   \STATE $R^* \leftarrow R$
               \ENDIF
           \ENDFOR
           \STATE $k^* \leftarrow k_{\text{best}}$
       \ENDIF
       
       \STATE $A^*, b^* \leftarrow \text{CalculateConstraints}(R^*)$
       
       \STATE \textit{\# Assign constraints to the sub-sequence}
       \FOR{$j = t$ {\bfseries to} $k^*-1$}
           \STATE $A_j \leftarrow A^*$
           \STATE $b_j \leftarrow b^*$
       \ENDFOR
       
       \STATE $t \leftarrow k^*$ $\quad\triangleright$\textit{ Jump to the end of the covered segment}
   \ENDWHILE
   
   \STATE {\bfseries return} $\{(A_t, b_t)\}_{t=0}^T$
\end{algorithmic}
\end{algorithm}

\paragraph{Dataset Generation.}
We constructed a demonstration policy using a Q-Iteration algorithm coupled with a PD controller in the \textit{original} \texttt{PointMaze\_Large-v3} environment. Trajectory data were collected by rolling out this policy. To enhance dataset diversity, minor random perturbations were injected into the policy action during data collection. In total, we collected 1,000 trajectories for training. It is crucial to note that the demonstration policy was trained on the original map configuration, without the corridor narrowing applied in our experimental setting. As a result, the training dataset inherently contains trajectories that violate the stricter safety constraints of the test environment, serving as a robust test for learning safe generation from potentially unsafe demonstrations.

\paragraph{Training Details.}
All models were trained on a single NVIDIA GeForce RTX 4090 GPU. The specific hyperparameters for our method and all baseline algorithms are summarized in Table \ref{tab:maze_params}. Specifically, the model architecture and training settings for the RoSD method were adopted directly from the configurations in \cite{xiao2023safediffuser}. %







\section{Safe Locomotion in Gym Environment}
\label{app:gym_locomotion}
In control tasks, we formulate the generative model as a receding-horizon controller. At each control step $t$, the model is conditioned on the current observation $o_t$ and predicts the joint state-action trajectory over a future horizon of length $T$: $\tau_{pred} = (a_t, o_{t+1}, a_{t+1}, \dots, o_{t+T-1}, a_{t+T-1})$. A key requirement of our framework is that all predicted states along this generated trajectory strictly satisfy the predefined safety constraints.
We utilize three representative environments from the Gymnasium library \cite{gymnasium} as our experimental testbed, including \texttt{Hopper-v5}, \texttt{Walker2d-v5}, and \texttt{HalfCheetah-v5}.


\subsection{Specifications}
\paragraph{Hopper Task}
To comprehensively evaluate the safety satisfaction of the proposed method, two constrained variants of the Hopper environment are considered:
\begin{itemize}
    \item \textbf{Hopper-Simple}: Following \cite{xiao2023safediffuser}, we impose a dynamic torso height constraint: 
    \begin{equation}
        z + 0.01 \cdot v_z \leq 1.6,
    \end{equation}
    where $z$ and $v_z$ denote the vertical position and velocity of the hopper, respectively.
    
    \item \textbf{Hopper-Hard}: This variant introduces a stricter set of composite constraints to narrow the feasible region, including the dynamic height limit, a lower bound on height, and velocity bounds:
    \begin{equation}
        z + 0.01 \cdot v_z \leq 1.5, \quad z \geq 0.8, \quad -2.5 \leq v_z \leq 2.5.
    \end{equation}
\end{itemize}
Notably, due to the coupling between position and velocity in the dynamic constraint ($z + 0.01 \cdot v_z$), valid states cannot be restored via simple truncation or clamping operations. This characteristic makes these tasks challenging benchmarks for evaluating safe generation capabilities.





\paragraph{Walker2d Task}
Similar to the Hopper tasks, we define two Walker2d variants with increasing constraint complexity to evaluate the adaptability of PolyFlow:

\begin{itemize}
    \item \textbf{Walker2d-Simple}: We impose a dynamic upper bound on the torso height:
    \begin{equation}
        z + 0.01 \cdot v_z \leq 1.35, 
    \end{equation}
    where $z$ and $v_z$ denote the vertical position and velocity of the torso, respectively.
    
    \item \textbf{Walker2d-Hard}: This variant introduces a stricter set of composite constraints, including the dynamic height limit, a minimum height requirement, and bounded vertical velocity:
    \begin{equation}
        z + 0.01 \cdot v_z \leq 1.35, \quad z \geq 0.9, \quad -1.4 \leq v_z \leq 1.4.
    \end{equation}
\end{itemize}



\paragraph{HalfCheetah Task}
To evaluate safety under high-dimensional action spaces, we untilize the \texttt{HalfCheetah-v5} environment and impose additional linear constraints on the action space, specifically targeting actuator saturation and mechanical torsion. The constraints are defined as:
\begin{align}
    \textbf{Actuator Saturation:}& \quad u_0 + u_1 \leq 1.2, \quad u_3 + u_4 \leq 1.2 \label{eq:halfcheetah_cons1}\\
    \textbf{Anti-Torsion:}& \quad |u_0 - u_3| \leq 0.8 \label{eq:halfcheetah_cons2}
\end{align}
Here, $u_0$ and $u_1$ denote the torque outputs applied to the \textit{thigh} and \textit{knee} joints of the back leg, respectively, while $u_3$ and $u_4$ correspond to the torque outputs for the \textit{thigh} and \textit{knee} joints of the front leg. 

These constraints reflect fundamental physical limitations of robotic systems:
\begin{itemize}
    \item \textbf{Actuator Saturation Constraint} (Eq. \eqref{eq:halfcheetah_cons1}) limits the combined torque exerted by the thigh and knee actuators of each leg. Applying large torques simultaneously in the same direction can induce excessive mechanical stress and actuator overheating.
    \item \textbf{Anti-Torsion Constraint} (Eq. \eqref{eq:halfcheetah_cons2}) restricts the differential torque between the primary drive joints (the back and front thighs). Excessive disparity between these torques can generate harmful torsional forces on the torso, potentially compromising structural integrity and locomotion stability in high-speed scenarios.
\end{itemize}

\subsection{Datasets}
We use training datasets from the Minari library \cite{minari}, specifically \texttt{mujoco/hopper/medium-v0}, \texttt{mujoco/walker2d/medium-v0}, and \texttt{mujoco/halfcheetah/medium-v0}. Each dataset consists of trajectories collected by policies with medium-level performance that were unaware of the safety constraints defined for our tasks. As a result, a portion of the trajectories in the datasets explicitly violate the safety boundaries, providing a realistic and challenging setting for learning safe control from imperfect demonstrations.

\subsection{Training Details}
All models were trained on a single NVIDIA GeForce RTX 4090 GPU. The specific hyperparameters for our method and all baseline algorithms are summarized in Table \ref{tab:hopper_hyperparams}, Table \ref{tab:walker2d_hyperparams}, and Table \ref{tab:HalfCheetah_hyperparams} for the Hopper, Walker2d, and HalfCheetah environments, respectively. Specifically, the model architecture and training settings for the RoSD method were adopted directly from the configurations in \cite{xiao2023safediffuser}.
\begin{table}[h]
    \centering
    \small  
    \caption{Hyperparameter settings for different methods in the Maze2d environment.}
    \label{tab:maze_params}
    \begin{minipage}[t]{0.25\linewidth}
        \centering
        \caption*{(a) Flow / SafeFlow} 
        \begin{tabular}{lc}
            \toprule
            Parameter & Value \\
            \midrule
            Hidden dim. & 128 \\
            Head num. & 4 \\
            DiT block num. & 4 \\
            Batch size & 100 \\
            Optimizer & Adam \\
            LR & 1e-4 \\
            Train steps & 1e4 \\
            Sample Steps & 200 \\
            Sequence Length  & 300 \\
            \bottomrule
        \end{tabular}
    \end{minipage}
    \hspace{0.2cm}
    \begin{minipage}[t]{0.4\linewidth}
        \centering
        \caption*{(b) RoSD}
        \begin{tabular}{lc}
            \toprule
            Parameter & Value \\
            \midrule
            Unet layers & 3 \\
            Hidden dim. & \scriptsize[32, 128, 256] \\ 
            Batch size & 100 \\
            Optimizer & Adam \\
            LR & 1e-4 \\
            Train steps & 1e4 \\
            Diff. Steps & 256 \\
            Sequence Length & 300 \\
            \bottomrule
        \end{tabular}
    \end{minipage}
    \hspace{0.2cm}
    \begin{minipage}[t]{0.25\linewidth}
        \centering
        \caption*{(c) PolyFlow (Ours)}
        \begin{tabular}{lc}
            \toprule
            Parameter & Value \\
            \midrule
            Hidden dim. & 128 \\
            Head num. & 4 \\
            Traj. enc. layers & 2 \\
            Cons. enc. layers & 2 \\
            Weight enc. layers & 2 \\
            Batch size & 100 \\
            Optimizer & Adam \\
            LR & 1e-4 \\
            Train steps & 1e4 \\
            Discrete Steps & 10 \\
            Sequence Length & 300 \\
            \bottomrule
        \end{tabular}
    \end{minipage}
    \vspace{0.3cm}
    
    \caption{Hyperparameter settings for different methods in the Hopper environment.}
    \label{tab:hopper_hyperparams}
    
    \begin{minipage}[t]{0.25\linewidth}
        \centering
        \caption*{(a) Flow / SafeFlow} 
        \begin{tabular}{lc}
            \toprule
            Parameter & Value \\
            \midrule
            Hidden dim. & 128 \\
            Head num. & 4 \\
            DiT block num. & 4 \\
            Batch size & 64 \\
            Optimizer & Adam \\
            LR & 1e-4 \\
            Train steps & 1e6 \\
            Sample Steps & 200 \\
            Horizon $T$ & 100 \\
            \bottomrule
        \end{tabular}
    \end{minipage}
    \hspace{0.2cm}
    \begin{minipage}[t]{0.4\linewidth}
        \centering
        \caption*{(b) RoSD}
        \begin{tabular}{lc}
            \toprule
            Parameter & Value \\
            \midrule
            Unet layers & 4 \\
            Hidden dim. & \scriptsize[32, 64, 128, 256] \\ 
            Batch size & 32 \\
            Optimizer & Adam \\
            LR & 2e-4 \\
            Train steps & 1e6 \\
            Diff. Steps & 20 \\
            Horizon $T$ & 600 \\
            \bottomrule
        \end{tabular}
    \end{minipage}
    \hspace{0.2cm}
    \begin{minipage}[t]{0.25\linewidth}
        \centering
        \caption*{(c) PolyFlow (Ours)}
        \begin{tabular}{lc}
            \toprule
            Parameter & Value \\
            \midrule
            Hidden dim. & 128 \\
            Head num. & 4 \\
            Traj. enc. layers & 4 \\
            Cons. enc. layers & 2 \\
            Weight enc. layers & 2 \\
            Batch size & 64 \\
            Optimizer & Adam \\
            LR & 1e-4 \\
            Train steps & 1e6 \\
            Discrete Steps & 10 \\
            Horizon $T$ & 100 \\
            \bottomrule
        \end{tabular}
    \end{minipage}
    \vspace{0.3cm}
    \caption{Hyperparameter settings for different methods in the Walker2d environment.}
    \label{tab:walker2d_hyperparams}
    
    \begin{minipage}[t]{0.25\linewidth}
        \centering
        \caption*{(a) Flow / SafeFlow} 
        \begin{tabular}{lc}
            \toprule
            Parameter & Value \\
            \midrule
            Hidden dim. & 128 \\
            Head num. & 4 \\
            DiT block num. & 4 \\
            Batch size & 64 \\
            Optimizer & Adam \\
            LR & 1e-4 \\
            Train steps & 1e6 \\
            Sample Steps & 100 \\
            Horizon $T$ & 100 \\
            \bottomrule
        \end{tabular}
    \end{minipage}
    \hspace{0.2cm}
    \begin{minipage}[t]{0.4\linewidth}
        \centering
        \caption*{(b) RoSD}
        \begin{tabular}{lc}
            \toprule
            Parameter & Value \\
            \midrule
            Unet layers & 4 \\
            Hidden dim. & \scriptsize[64, 128, 256, 512] \\ 
            Batch size & 32 \\
            Optimizer & Adam \\
            LR & 2e-4 \\
            Train steps & 1e6 \\
            Diff. Steps & 20 \\
            Horizon $T$ & 160 \\
            \bottomrule
        \end{tabular}
    \end{minipage}
    \hspace{0.2cm}
    \begin{minipage}[t]{0.25\linewidth}
        \centering
        \caption*{(c) PolyFlow (Ours)}
        \begin{tabular}{lc}
            \toprule
            Parameter & Value \\
            \midrule
            Hidden dim. & 128 \\
            Head num. & 4 \\
            Traj. enc. layers & 4 \\
            Cons. enc. layers & 2 \\
            Weight enc. layers & 2 \\
            Batch size & 64 \\
            Optimizer & Adam \\
            LR & 1e-4 \\
            Train steps & 1e6 \\
            Discrete Steps & 10 \\
            Horizon $T$ & 100 \\
            \bottomrule
        \end{tabular}
    \end{minipage}
    \vspace{0.3cm}
    \caption{Hyperparameter settings for different methods in the HalfCheetach environment.}
    \label{tab:HalfCheetah_hyperparams}
    
    \begin{minipage}[t]{0.25\linewidth}
        \centering
        \caption*{(a) Flow / SafeFlow} 
        \begin{tabular}{lc}
            \toprule
            Parameter & Value \\
            \midrule
            Hidden dim. & 128 \\
            Head num. & 4 \\
            DiT block num. & 4 \\
            Batch size & 64 \\
            Optimizer & Adam \\
            LR & 1e-4 \\
            Train steps & 1e6 \\
            Sample Steps & 100 \\
            Horizon $T$ & 100 \\
            \bottomrule
        \end{tabular}
    \end{minipage}
    \hspace{0.2cm}
    \begin{minipage}[t]{0.4\linewidth}
        \centering
        \caption*{(b) RoSD}
        \begin{tabular}{lc}
            \toprule
            Parameter & Value \\
            \midrule
            Unet layers & 4 \\
            Hidden dim. & \scriptsize[64, 128, 256, 512] \\ 
            Batch size & 32 \\
            Optimizer & Adam \\
            LR & 2e-4 \\
            Train steps & 1e6 \\
            Diff. Steps & 20 \\
            Horizon $T$ & 160 \\
            \bottomrule
        \end{tabular}
    \end{minipage}
    \hspace{0.2cm}
    \begin{minipage}[t]{0.25\linewidth}
        \centering
        \caption*{(c) PolyFlow (Ours)}
        \begin{tabular}{lc}
            \toprule
            Parameter & Value \\
            \midrule
            Hidden dim. & 128 \\
            Head num. & 4 \\
            Traj. enc. layers & 4 \\
            Cons. enc. layers & 2 \\
            Weight enc. layers & 2 \\
            Batch size & 64 \\
            Optimizer & Adam \\
            LR & 1e-4 \\
            Train steps & 1e6 \\
            Discrete Steps & 10 \\
            Horizon $T$ & 100 \\
            \bottomrule
        \end{tabular}
    \end{minipage}
\end{table}

\subsection{Additional Experimental Results}
We provide additional qualitative and quantitative results to further illustrate the behavior of different methods under safety constraints. 
Figure \ref{fig:hopper_rollout} visualizes the closed-loop rollout trajectories in the Hopper environment. Although PolyFlow, SafeFlow, and RoSD enforce strict constraint satisfaction during open-loop trajectory generation, constraint violations may still arise during closed-loop execution due to the mismatch between predicted trajectories and environment dynamics. 
Figures \ref{fig:walker_dist_combined} and \ref{fig:halfcheetah_dist_combined} visualize the distributions of generated states or actions in the Walker2d and HalfCheetah tasks. Compared to  unconstrained baselines(Flow), methods that explicitly incorporate safety constraints produce samples that are more tightly concentrated within feasible regions. Consistent with these observations, the detailed generation and rollout metrics reported in Tables \ref{tab:gym_detail_result} and \ref{tab:rollout_metrics} also indicate that constraint-aware methods consistently improve safety compared to the unconstrained flow baseline. Among these approaches, PolyFlow achieves the most favorable trade-off, delivering the strongest safety performance while preserving competitive distributional quality and computational efficiency across all environments.

\clearpage


\begin{figure}[t]
    \centering
    \includegraphics[width=0.6\linewidth]{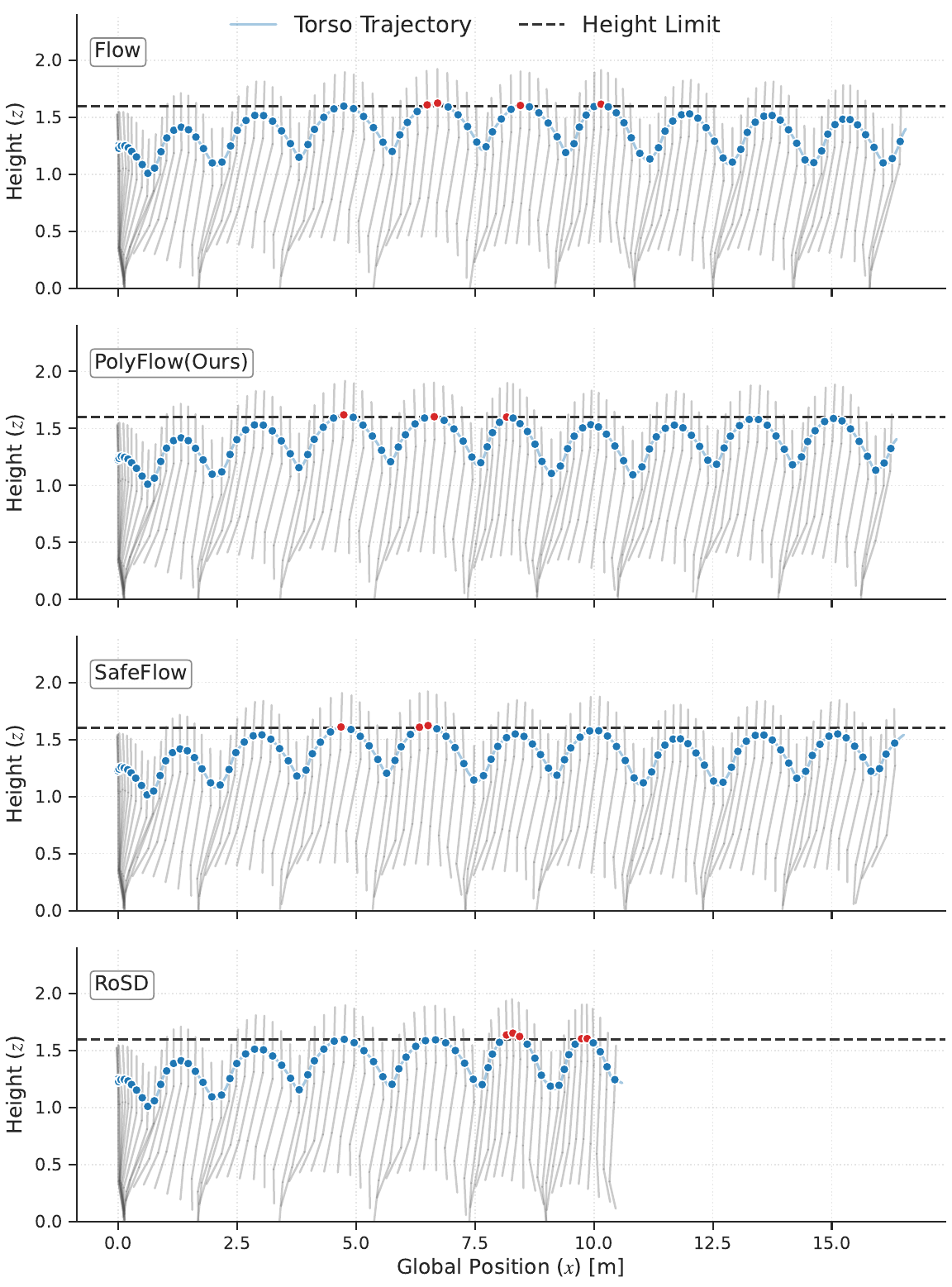}
    \caption{The sampled rollout trajectory of different methods in Hopper-Simple task.}
    \label{fig:hopper_rollout}
\end{figure}

\begin{figure*}[t]
    \centering
    \includegraphics[width=0.75\linewidth]{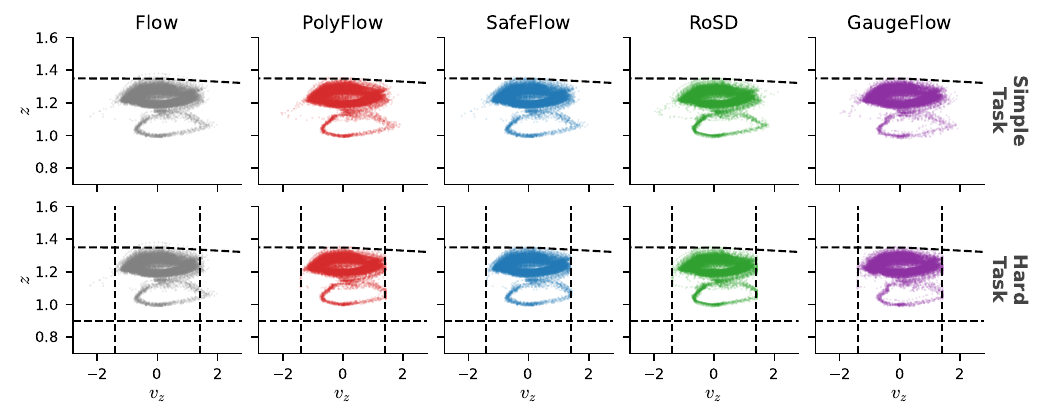}
    \caption{\textbf{Visualization of generated state samples on Walker2d tasks.} This figure illustrates the distribution of sampling points generated by different methods for the Walker2d-Simple and Walker2d-Complex tasks, plotted on the $(v_z, z)$ plane. The black dashed lines represent the constraint boundaries.}
    \label{fig:walker_dist_combined}
\end{figure*}

\begin{figure*}[t]
    \centering
    \includegraphics[width=0.75\linewidth]{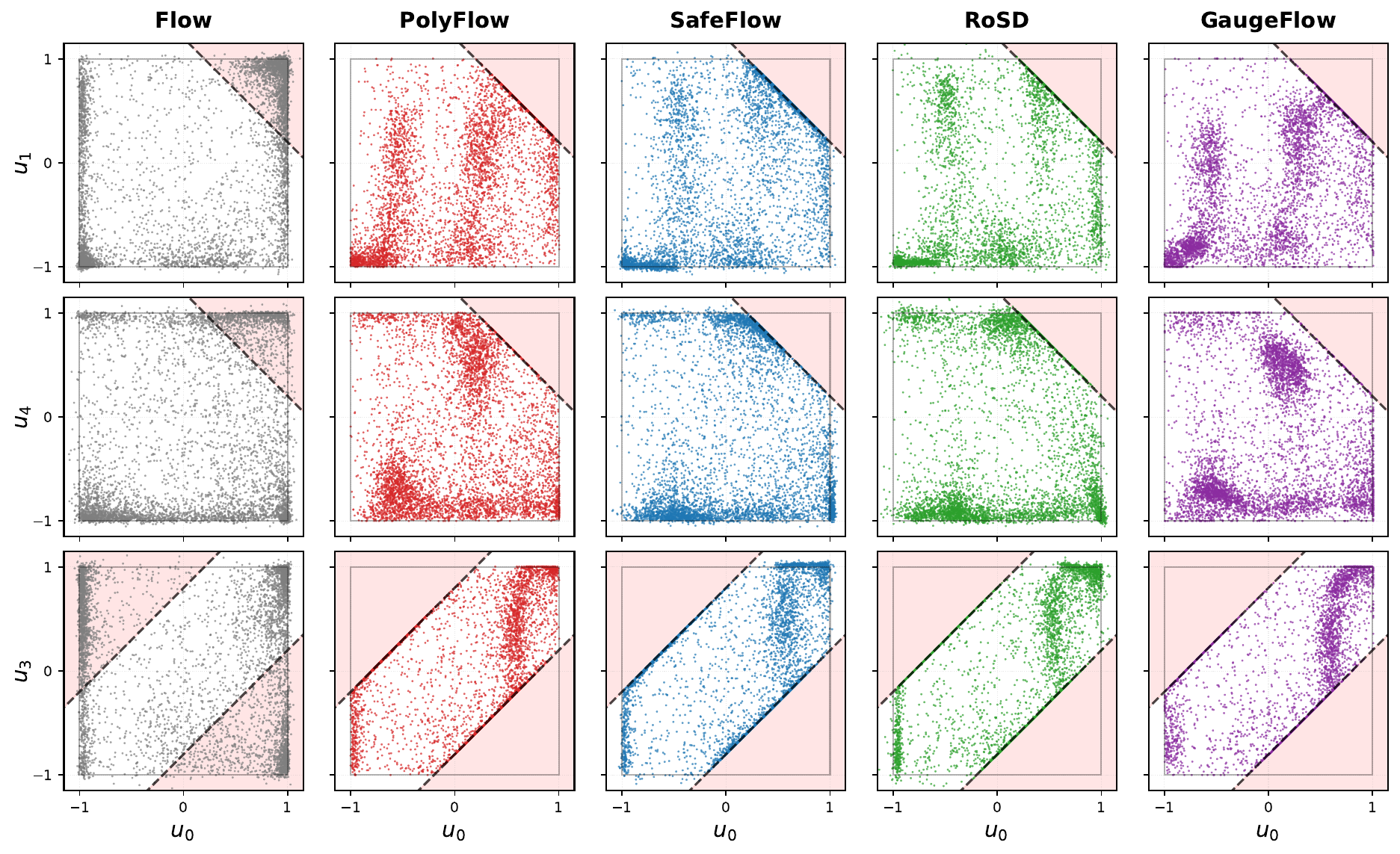}
    \caption{\textbf{Visualization of generated action samples on the HalfCheetah task.} The black dashed lines represent the constraint boundaries, while the red shaded regions indicate areas of constraint violation.}
    \label{fig:halfcheetah_dist_combined}
\end{figure*}

\begin{table*}[t]
\caption{Comparison of generation metrics across five different environments. \textbf{PolyFlow (Ours)} consistently achieves high safety and competitive performance metrics.}
\label{tab:gym_detail_result}
\centering

\begin{minipage}{0.49\linewidth}
    \centering
    \caption*{(\textbf{a}) Hopper-Simple}
    \resizebox{\linewidth}{!}{
    \begin{tabular}{lccccc}
        \toprule
        Metric & Flow & PolyFlow & SafeFlow & RoSD & GaugeFlow\\
        \midrule
        Safety($\uparrow$) & 0.755 & \textbf{1.000} & \textbf{1.000} & \textbf{1.000} & \textbf{1.000}\\
        \midrule
        MMD($\downarrow$)  & $2.10 \text{-}4$ & $\mathbf{1.74 \text{-}4}$ & $2.10 \text{-}4$ & $1.97 \text{-}3$ & $2.41 \text{-}4$\\
        W($\downarrow$)    & 1.152 & \textbf{1.114} & 1.152 & 1.788 & 1.159\\
        KL($\downarrow$)   & 0.436 & \textbf{0.393} & 0.415 & 1.862 & 0.433\\
        \midrule
        Acc($\downarrow$) & \textbf{0.010} & 0.011 & \textbf{0.010} & 0.013 & 0.011\\
        \midrule
        TotalTime($\downarrow$) & 0.698 & \textbf{0.075} & 0.833 & 1.749 & 0.867\\
        StepTime($\downarrow$) & \textbf{0.003} & 0.007 & 0.004 & 0.087 & 0.004\\
        \bottomrule
    \end{tabular}
    }
\end{minipage}
\hfill
\begin{minipage}{0.49\linewidth}
    \centering
    \caption*{(\textbf{b}) Hopper-Hard}
    \resizebox{\linewidth}{!}{
    \begin{tabular}{lccccc}
        \toprule
        Metric & Flow & PolyFlow & SafeFlow & RoSD & GaugeFlow\\
        \midrule
        Safety($\uparrow$) & 0.005 & \textbf{1.000} & \textbf{1.000} & \textbf{1.000} & \textbf{1.000}\\
        \midrule
        MMD($\downarrow$)  & $2.10 \text{-}4$ & $\mathbf{1.74 \text{-}4}$ & $2.33 \text{-}4$ & $5.22 \text{-}3$ & $2.52 \text{-}4$\\
        W($\downarrow$)    & 1.152 & \textbf{1.117} & 1.157 & 1.963 & 1.161\\
        KL($\downarrow$)   & \textbf{0.436} & 0.553 & 0.608 & 23.894 & 0.592\\
        \midrule
        Acc($\downarrow$) & \textbf{0.010} & 0.012 & 0.011 & 0.017 & 0.011\\
        \midrule
        TotalTime($\downarrow$) & 0.698 & \textbf{0.081} & 15.125 & 2.114 & 0.878\\
        StepTime($\downarrow$) & \textbf{0.003} & 0.008 & 0.076 & 0.106 & 0.004\\
        \bottomrule
    \end{tabular}
    }
\end{minipage}

\vspace{0.3cm} 

\begin{minipage}{0.49\linewidth}
    \centering
    \caption*{(\textbf{c}) Walker2d-Simple}
    \resizebox{\linewidth}{!}{
    \begin{tabular}{lccccc}
        \toprule
        Metric & Flow & PolyFlow & SafeFlow & RoSD & GaugeFlow\\
        \midrule
        Safety($\uparrow$) & 0.985 & \textbf{1.000} & \textbf{1.000} & \textbf{1.000} & \textbf{1.000}\\
        \midrule
        MMD($\downarrow$)  & $9.72 \text{-}4$ & $\mathbf{8.54 \text{-}4}$ & $9.72 \text{-}4$ & $1.47 \text{-}3$ & $1.05 \text{-}3$\\
        W($\downarrow$)    & 3.638 & \textbf{3.451} & 3.638 & 3.781 & 3.650\\
        KL($\downarrow$)   & 2.700 & \textbf{2.291} & 2.698 & 3.154 & 2.808\\
        \midrule
        Acc($\downarrow$) & \textbf{0.034} & 0.035 & \textbf{0.034} & 0.036 & \textbf{0.034}\\
        \midrule
        TotalTime($\downarrow$) & 0.403 & \textbf{0.081} & 0.598 & 2.043 & 0.649\\
        StepTime($\downarrow$) & \textbf{0.004} & 0.008 & 0.006 & 0.102 & 0.006\\
        \bottomrule
    \end{tabular}
    }
\end{minipage}
\hfill
\begin{minipage}{0.49\linewidth}
    \centering
    \caption*{(\textbf{d}) Walker2d-Hard}
    \resizebox{\linewidth}{!}{
    \begin{tabular}{lccccc}
        \toprule
        Metric & Flow & PolyFlow & SafeFlow & RoSD & GaugeFlow\\
        \midrule
        Safety($\uparrow$) & 0.705 & \textbf{1.000} & \textbf{1.000} & \textbf{1.000} & \textbf{1.000}\\
        \midrule
        MMD($\downarrow$)  & $9.72 \text{-}4$ & $\mathbf{8.75 \text{-}4}$ & $9.73 \text{-}4$ & $1.49 \text{-}3$ & $1.02 \text{-}3$\\
        W($\downarrow$)    & 3.638 & \textbf{3.440} & 3.637 & 3.785 & 3.647\\
        KL($\downarrow$)   & 2.700 & \textbf{2.235} & 2.686 & 3.159 & 2.788\\
        \midrule
        Acc($\downarrow$) & \textbf{0.034} & 0.035 & \textbf{0.034} & 0.036 & 0.035\\
        \midrule
        TotalTime($\downarrow$) & 0.348 & \textbf{0.079} & 4.530 & 1.339 & 0.435\\
        StepTime($\downarrow$) & \textbf{0.003} & 0.008 & 0.045 & 0.067 & 0.004\\
        \bottomrule
    \end{tabular}
    }
\end{minipage}

\vspace{0.3cm} 

\begin{minipage}{0.49\linewidth}
    \centering
    \caption*{(\textbf{e}) Halfcheetah}
    \resizebox{\linewidth}{!}{
    \begin{tabular}{lccccc}
        \toprule
        Metric & Flow & PolyFlow & SafeFlow & RoSD & GaugeFlow\\
        \midrule
        Safety($\uparrow$) & 0.000 & \textbf{1.000} & \textbf{1.000} & \textbf{1.000} & \textbf{1.000}\\
        \midrule
        MMD($\downarrow$)  & $\mathbf{3.10 \text{-}3}$ & $3.60 \text{-}2$ & $1.60 \text{-}2$ & $1.44 \text{-}2$ & $2.78 \text{-}2$\\
        W($\downarrow$)    & \textbf{0.665} & 0.857 & 0.769 & 0.772 & 0.816\\
        KL($\downarrow$)   & \textbf{0.922} & 2.643 & 2.962 & 3.219 & 2.589\\
        \midrule
        Acc($\downarrow$) & 3.234 & \textbf{2.708} & 2.939 & 3.011 & 2.759\\
        \midrule
        TotalTime($\downarrow$) & 0.358 & \textbf{0.089} & 10.071 & 2.334 & 0.502\\
        StepTime($\downarrow$) & \textbf{0.004} & 0.009 & 0.101 & 0.117 & 0.005\\
        \bottomrule
    \end{tabular}
    }
\end{minipage}

\end{table*}

\begin{table*}[t]
\caption{Rollout metrics across five different environments. We report the mean and standard deviation over 10 episodes. \textit{Max V. Mag.} denotes the maximum constraint violation magnitude ratio, and \textit{Max V. Dur.} represents the maximum proportion of time steps where constraints are violated. (\textbf{Note}: In the Hopper and Walker2d tasks, constraints are enforced solely on the states of the generated window trajectories, without explicitly restricting the executed actions. Consequently, constraint violations may occur during rollouts. However the imposed constraints provide guidance for the rollout trajectories. In contrast, in the HalfCheetah task, constraints are imposed on the action space, thereby ensuring constraint satisfaction during rollouts.)}
\label{tab:rollout_metrics}
\centering
\resizebox{0.85\textwidth}{!}{%
\begin{tabular}{llccccc}
\toprule
\textbf{Environment} & \textbf{Metric} & \textbf{Flow} & \textbf{PolyFlow (Ours)} & \textbf{SafeFlow} & \textbf{RoSD} & \textbf{GaugeFlow} \\
\midrule
\multirow{3}{*}{\textbf{Hopper-Simple}} 
 & Max V. Mag. ($\downarrow$)  & 0.028          & 0.026          & 0.033          & \textbf{0.003} & 0.024 \\
 & Max V. Dur. ($\downarrow$)  & 0.086          & 0.061          & 0.065          & \textbf{0.011} & 0.079 \\
 & Rollout Return ($\uparrow$) & $2450 \pm 878$ & $\mathbf{3187 \pm 753}$ & $2628 \pm 887$ & $961 \pm 613$  & $2473 \pm 951$ \\
\midrule
\multirow{3}{*}{\textbf{Hopper-Hard}} 
 & Max V. Mag. ($\downarrow$)  & 0.456          & \textbf{0.137}          & 0.325             & 0.256          & 0.228 \\
 & Max V. Dur. ($\downarrow$)  & 0.350          & 0.330          & 0.319             & \textbf{0.224} & 0.327 \\
 & Rollout Return ($\uparrow$) & $2450 \pm 878$ & $\mathbf{2949 \pm 838}$ & 2397$\pm$877             & $937 \pm 539$  & $2851 \pm 944$ \\
\midrule
\multirow{3}{*}{\textbf{Walker2d-Simple}} 
 & Max V. Mag. ($\downarrow$)  & \textbf{0}     & \textbf{0}     & 0.008          & 0.011          & 0.001 \\
 & Max V. Dur. ($\downarrow$)  & \textbf{0}     & \textbf{0}     & 0.012          & 0.009          & 0.002 \\
 & Rollout Return ($\uparrow$) & $5895 \pm 113$ & $\mathbf{6031 \pm 89}$  & $5936 \pm 96$  & $5816 \pm 224$ & $5974 \pm 72$ \\
\midrule
\multirow{3}{*}{\textbf{Walker2d-Hard}} 
 & Max V. Mag. ($\downarrow$)  & 0.534          & \textbf{0.484}          & 0.513          & 1.393          & 0.892 \\
 & Max V. Dur. ($\downarrow$)  & 0.023          & 0.043          & \textbf{0.020} & 0.069          & 0.044 \\
 & Rollout Return ($\uparrow$) & $5895 \pm 113$ & $\mathbf{5981 \pm 114}$ & $5961 \pm 95$  & $5716 \pm 577$ & $5783 \pm 618$ \\
\midrule
\multirow{3}{*}{\textbf{Halfcheetah}} 
 & Max V. Mag. ($\downarrow$)  & 1.815             & \textbf{0}     & \textbf{0}             & \textbf{0}             & \textbf{0} \\
 & Max V. Dur. ($\downarrow$)  & 0.689             & \textbf{0}     & \textbf{0}             & \textbf{0}             & \textbf{0} \\
 & Rollout Return ($\uparrow$) & 724$\pm$375             & $\mathbf{2977 \pm 785}$ & 2034$\pm$892             & 2083$\pm$580    & 1399$\pm$309 \\
\bottomrule
\end{tabular}
}
\end{table*}

\clearpage
\section{Safe Locomotion in Isaac Gym Environment}
\label{app:isaac_gym_locomotion}
\subsection{Specifications}
To further evaluate the efficacy of PolyFlow under high-dimensional dynamics and complex safety constraints, we extend our experiments to quadrupedal locomotion in Isaac Gym environment \cite{makoviychuk2021isaac}. We use the Unitree Go2 robot, which has 12 actuated joints with 3 degrees of freedom (DoF) per leg (hip, thigh, and calf).

Ensuring physically feasible contact is critical for stable locomotion. In this work, we define safety through friction cone constraints to prevent foot slippage. Given a contact force $\mathbf{f}=[f_x, f_y, f_z]^T$ and friction coefficient $\mu$, the nonlinear friction cone is given by:
\begin{equation}
    \sqrt{f_x^2+f_y^2}\leq\mu f_z,\quad 0 \leq f_{z}\leq f_{z,max}.
\end{equation}
In practical implementation, this nonlinear friction cone is commonly linearized into a pyramidal rectangular approximation:
\begin{equation}
    |f_x|\leq\mu f_z, \quad |f_y|\leq\mu f_z,  0 \leq f_{z}\leq f_{z,max}.
\end{equation}
These constraints can be compactly represented as a set of linear inequalities in the contact force space, expressed as $\mathbf{A}_{f}\mathbf{f}\leq \mathbf{b}_f$, where:
\begin{equation}
    \mathbf{A}_{f} = 
    \begin{bmatrix} 
    1 & 0 & -\mu \\
    -1 & 0 & -\mu \\
    0 & 1 & -\mu \\
    0 & -1 & -\mu \\
    0 & 0 & 1 
    \end{bmatrix}, \quad 
    \mathbf{b} = \begin{bmatrix} 
    0 \\ 0 \\ 0 \\ 0 \\ f_{z,max} 
    \end{bmatrix}.
\end{equation}

\subsubsection{Convert Constraints to the Action Space}
To enforce safety constraints at the actuation level, we map the contact force constraints into the action space. Consider the robot dynamic equation:
\begin{equation}
    \mathbf{M}(\mathbf{q})\ddot{\mathbf{q}}+\mathbf{c}(\mathbf{q},\dot{\mathbf{q}})+\mathbf{g}(\mathbf{q})=\mathbf{B}\mathbf{\tau}+\mathbf{J}^T_{all}(\mathbf{q})\mathbf{f}_{all}, \label{eq:full_dynamic}
\end{equation}
where $\mathbf{q} \in \mathit{SE}(3)\times\mathbb{R}^{12}$ denotes the position and quaternion of the floating-base, as well as the joint positions; $\dot{\mathbf{q}} \in \mathbb{R}^{6+12}$ denotes the linear, angular velocity of the base, as well as the joint velocities; $\mathbf{f}_{all}$ denotes the concatenate of all contact Ground Reaction Forces (GRFs) and wrenches;
$\mathbf{\tau} \in \mathbb{R}^{12}$ denotes joint torques; $\mathbf{J}^T_{all} \in \mathbb{R}^{(6+12) \times 6n_c}$ denotes Jacobians ($n_c$ is the contact leg number), $\mathbf{M} \in \mathbb{R}^{18\times 18}$, $\mathbf{c}\in\mathbb{R}^{18}$, and $\mathbf{g}\in\mathbb{R}^{18}$ refers to mass matrix, Centrifugal and Coriolis terms, and gravity vector, respectively.

Given that the full Jacobian $\mathbf{J}^T_{all}$ exhibits a block-diagonal structure, we can decouple Equation \ref{eq:full_dynamic} to isolate the dynamics of individual contacting legs. For a single leg, the relationship between the contact force and joint torque is expressed as:
\begin{equation}
    \mathbf{f} = \mathbf{J}^{-T}(\mathbf{h} - \mathbf{\tau}),
\end{equation}
where $\mathbf{J}^{-T} \in \mathbb{R}^{3\times 3}$ is the inverse transpose of the leg Jacobian, $\mathbf{h}\in\mathbb{R}^3$ represents the aggregated nonlinear dynamic terms, and $\mathbf{\tau}\in\mathbb{R}^3$ denotes the leg joint torques. We focus exclusively on the ground reaction force constraints, neglecting moment constraints; consequently, we utilize the translational Jacobian associated with the contact force. Notably, due to the specific 3-DoF configuration of the Unitree Go2 legs, the Jacobian $\mathbf{J}$ is square and full-rank (except at singular configurations), allowing for direct matrix inversion.

Leveraging the invertibility of the Jacobian, the mapping from the contact force space to the joint torque space is convexity-preserving. To further map this to action space, we consider the PD controller formulation:
\begin{equation}
    \mathbf{\tau} = K_p(a + q_n - q_{cur}) - K_d \dot{q}_{cur},
\end{equation}
where $K_p, K_d \in \mathbb{R}^{3\times3}$ are diagonal matrices representing the proportional and derivative gains, respectively; $q_n$ denotes the nominal joint positions; and $q_{cur}, \dot{q}_{cur}$ represent the current joint positions and velocities. 

By substituting the torque formulation into the force constraints, we can directly derive the safety polytope in the action space via an affine transformation. This yields a set of time-varying linear inequalities $\mathbf{A} a \leq \mathbf{b}$, defined as:
\begin{equation}
\begin{gathered}
    \mathbf{A} a\leq \mathbf{b} \\
    \mathbf{A} = - \mathbf{A}_f \mathbf{J}^{-T} K_p \\
    \mathbf{b} = \mathbf{b}_f + \mathbf{A}_f \mathbf{J}^{-T} (K_p(q_n-q_{cur})-K_d \dot{q}_{cur}).
\end{gathered}
\end{equation}
These linear constraints are then integrated into the PolyFlow framework to ensure feasible and safe locomotion commands.

It is worth noting that the constraint parameters $\mathbf{A}(q_t)$ and $\mathbf{b}(q_t, \dot{q}_t)$ are state-dependent and evolve dynamically with the robot's instantaneous configuration. Consequently, the safety constraints form a time-varying polytope in the action space. Figure \ref{fig:go2_polytope} illustrates the geometry of this feasible action region at a single timestep.





\subsection{Training Details}
\label{sec:go2_training_details}

Given the state-dependent nature of the safety constraints (as defined by $\mathbf{A}(q)$ and $\mathbf{b}(q,\dot{q})$), accurate prediction of future constraint matrices is infeasible. Consequently, we restrict the prediction horizon of the flow model to a single time step ($H=1$). In this setup, the model maps the current observation directly to the immediate action. To mitigate covariate shift and reduce compounding errors during rollout, we employ the DAgger (Dataset Aggregation) algorithm \cite{ross2011reduction} to train the PolyFlow model.

\textbf{Expert Policy Training.} We initially train an expert policy using PPO \cite{schulman2017proximal} algorithm. The policy architecture and hyperparameters used for data generation are adapted from \cite{rudin2022learning}. To enhance robustness, domain randomization is applied to the ground friction coefficient, motor strength, base mass, and external disturbance forces. Furthermore, we incorporate an auxiliary penalty term into the reward function to discourage constraint violations, imposing a ``soft constraint'' on the expert's behavior. It is important to note that the trajectories generated by this expert are not strictly safe and may still exhibit occasional constraint violations.

\textbf{PolyFlow Training.} Leveraging the pre-trained expert, we train the PolyFlow model within the Isaac Gym environment using the DAgger framework. The specific hyperparameters used for training are detailed in Table \ref{tab:go2_hyperparameters}.

\begin{figure}[t] 
    \centering
    
    \begin{minipage}[c]{0.68\linewidth} 
        \centering
        \includegraphics[width=\linewidth]{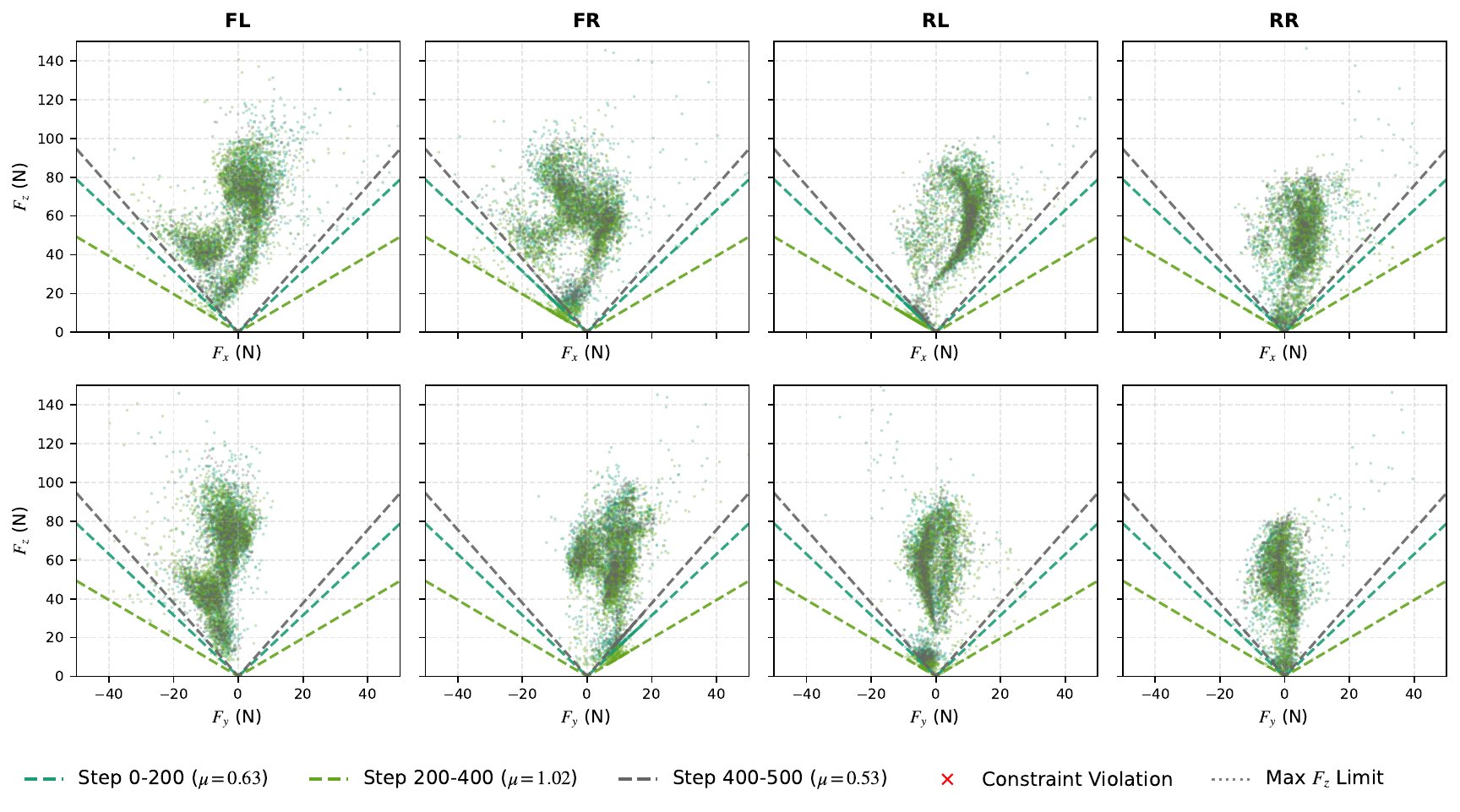}
        \caption{Distribution of ground reaction forces for stance legs generated by PolyFlow. During rollout, the friction coefficient $\mu$ is randomized to evaluate the policy adherence under varying safety constraints. Dashed lines indicate the boundaries of the friction cone, demonstrating strict safety satisfaction when constraints shift dynamically.}
        \label{fig:go2_grfs}
    \end{minipage}
    \hfill 
    \begin{minipage}[c]{0.30\linewidth}
        \centering
        \includegraphics[width=\linewidth]{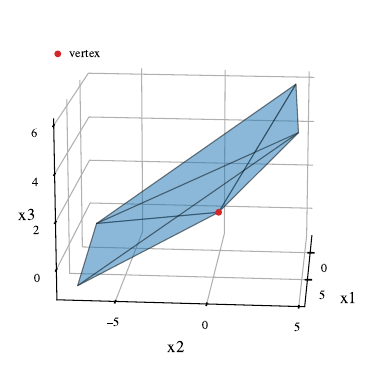}
        \caption{Visualization of the feasible action polytope for the Unitree Go2 robot. The region represents the set of safe actions that satisfy the linearized friction cone constraints given the current robot state $q_t$ and $\dot{q}_t$.}
        \label{fig:go2_polytope}
    \end{minipage}
    
\end{figure}

\begin{table}[ht]
    \centering
    \caption{Hyperparameters for PolyFlow training in Go2 locomotion task.}
    \label{tab:go2_hyperparameters}
    \begin{minipage}[t]{0.4\textwidth}
        \centering
        \resizebox{\linewidth}{!}{ 
            \begin{tabular}{lc}
                \toprule
                \textbf{Parameter} & \textbf{Value} \\
                \midrule
                Optimizer & Adam \\
                Learning Rate & $1 \times 10^{-4}$ \\
                Batch Size & 512 \\
                DAgger Iterations & 100 \\
                Samples per Iteration & $4096 \times 50$ \\
                Training Epochs per Iteration & 15 \\
                $\beta$ Decay & 0.98 \\
                \bottomrule
            \end{tabular}
        }
    \end{minipage}
    \hspace{1cm}
    \begin{minipage}[t]{0.3\textwidth}
        \centering
        \resizebox{\linewidth}{!}{ 
            \begin{tabular}{lc}
                \toprule
                \textbf{Parameter} & \textbf{Value} \\
                \midrule
                Backbone Network & DiT \\
                Hidden Dim. & $256$ \\
                Trajectory Encoder Layers & 4 \\
                Constraint Encoder Layers & 4 \\
                Discrete Steps & 10 \\
                \bottomrule
            \end{tabular}
        }
    \end{minipage}
\end{table}

\section{Ablation Study}
\label{app:abation}

\subsection{Necessity of Constraint Embedding}
To clarify the role of the constraint encoding architecture, we conducted an ablation study across three tasks. Since the set of linear inequalities is inherently unordered, any effective encoding must be permutation-invariant. We compared three configurations:
\begin{itemize}
    \item \textbf{PolyFlow-attn}: Our default architecture, where constraint embeddings are fused into the latent space via cross-attention.
    \item \textbf{PolyFlow-mlp}: Constraints are processed by an MLP, aggregated via a sum operator (to ensure permutation-invariance), and concatenated with trajectory features.
    \item \textbf{PolyFlow-w/o embed}: In this case, no constraint information is provided. The network relies solely on state inputs.
\end{itemize}

As shown in Table \ref{tab:ablation_encoding_static} and Table \ref{tab:ablation_encoding_dynamic}, for tasks with static constraints (Maze2d, HalfCheetah), the "w/o" baseline performs surprisingly well. We hypothesize that the model implicitly learns the fixed boundaries from the state distribution. However, in the Unitree Go2 task, where constraints are state-dependent and time-varying, explicit encoding becomes essential for the model to generalize to shifting safe boundaries.

\begin{table}[ht]
\caption{Performance Comparison on Maze2d and HalfCheetah with different constraint encoding architectures.}
\label{tab:ablation_encoding_static}
\centering
\resizebox{0.75\linewidth}{!}{
\begin{tabular}{llccccc}
\toprule
\textbf{Task} & \textbf{Architecture} & \textbf{MMD} ($\downarrow$) & \textbf{W} ($\downarrow$) & \textbf{KL} ($\downarrow$) & \textbf{Acc} ($\downarrow$) & \textbf{TotalTime(s)} ($\downarrow$) \\
\midrule
\multirow{3}{*}{Maze2d} & PolyFlow-attn & 6.20e-6 & 0.041 & 0.134 & 6.15e-3 & 2.11 \\
 & PolyFlow-mlp & 2.62e-6 & 0.033 & 0.050 & 6.37e-3 & 0.58 \\
 & PolyFlow-w/o embed & 1.91e-6 & 0.033 & 0.066 & 6.03e-3 & 0.50 \\
\midrule
\multirow{3}{*}{HalfCheetah} & PolyFlow-attn & 3.60e-2 & 0.857 & 2.643 & 2.708 & 0.089 \\
 & PolyFlow-mlp & 2.98e-2 & 0.835 & 2.451 & 2.735 & 0.061 \\
 & PolyFlow-w/o embed & 3.80e-2 & 0.836 & 2.471 & 2.717 & 0.060 \\
\bottomrule
\end{tabular}
}
\end{table}

\begin{table}[ht]
\caption{Rollout Performance on Unitree Go2 (Dynamic Constraints).}
\label{tab:ablation_encoding_dynamic}
\centering
\begin{tabular}{lcc}
\toprule
\textbf{Architecture} & \textbf{Episode Length} ($\uparrow$) & \textbf{Tracking Error} ($\downarrow$) \\
\midrule
PolyFlow-attn & $305 \pm 37$ & $0.026 \pm 0.007$ \\
PolyFlow-mlp       & $311 \pm 64$ & $0.026 \pm 0.007$ \\
PolyFLow-w/o embed       & $227 \pm 15$ & $0.046 \pm 0.008$ \\
\bottomrule
\end{tabular}
\end{table}

\subsection{Coupling between Weight Predictor and Direction Predictor}
Considering the conceptual question that absolute displacement is inherently direction-dependent, we conducted an ablation comparing our decoupled architecture against a "coupled" variant (where the weight predictor $\gamma_\theta$ receives direction info from $d_\theta$). 

The full ablation results are presented in Table \ref{tab:ablation_coupling}. Decoupling the step size from the direction has only a marginal effect on overall performance. This suggests that the step-size predictor can implicitly infer directional context from the shared latent features (state and constraint embeddings) without explicit direction input.

\begin{table}[ht]
\caption{Coupled vs. Decoupled Architecture Performance.}
\label{tab:ablation_coupling}
\centering
\resizebox{0.6\linewidth}{!}{
\begin{tabular}{llcccc}
\toprule
\textbf{Task} & \textbf{Architecture} & \textbf{MMD} ($\downarrow$) & \textbf{W} ($\downarrow$) & \textbf{KL} ($\downarrow$) & \textbf{Acc} ($\downarrow$) \\
\midrule
\multirow{2}{*}{Hopper-Simple} & Coupled & 1.74e-4 & 1.114 & 0.393 & 0.011 \\
 & Decoupled & 2.37e-4 & 1.098 & 0.367 & 0.012 \\
\midrule
\multirow{2}{*}{Walker2d-Simple} & Coupled & 8.54e-4 & 3.451 & 2.291 & 0.035 \\
 & Decoupled & 9.69e-4 & 3.564 & 2.344 & 0.036 \\
\midrule
\multirow{2}{*}{HalfCheetah} & Coupled & 3.60e-2 & 0.857 & 2.643 & 2.708 \\
 & Decoupled & 3.10e-2 & 0.841 & 2.458 & 2.727 \\
\midrule
\multirow{2}{*}{Maze2D} & Coupled & 6.20e-6 & 0.041 & 0.134 & 0.098 \\
 & Decoupled & 8.23e-6 & 0.038 & 0.105 & 0.098 \\
\bottomrule
\end{tabular}
}
\end{table}

\subsection{Ray Shooting Operators} \label{app:ray_shooting_abation}
We evaluate the influence of different ray shooting operations on the overall model performance. Specifically, we compare three variants of the minimization operator defined in Eq.~\ref{eq:ray_shooting_min}, each representing a different trade-off between smoothness and precision:
\begin{itemize}
    \item \textbf{Hard:} The standard minimization operator, $\alpha_{\text{hard}} = \min \{t_1, t_2, \dots, t_n\}$, where $t_i$ denotes the distance to the $i$-th boundary intersection.
    \item \textbf{Softmin:} A smooth approximation using the LogSumExp operator, defined as $\alpha_{\text{softmin}} = -\frac{1}{\beta} \ln \left( \sum_{i=1}^{n} e^{-\beta t_i} \right)$.
    \item \textbf{Boltzmann:} A probabilistic weighted average, given by $\alpha_{\text{boltzmann}} = \sum_{i=1}^{m} w_i \cdot t_i = \frac{\sum_{i=1}^{n} t_i e^{-\beta t_i}}{\sum_{j=1}^{n} e^{-\beta t_j}}$.
\end{itemize}

Theoretically, the value derived from the \textbf{Softmin} is strictly lower than the true minimum (\textbf{Hard}), making it a conservative approximation that satisfies safety constraints. Conversely, \textbf{Boltzmann} yields an expected minimum, which does not provide strict theoretical guarantees for constraint satisfaction.

We evaluated these variants on the HalfCheetah task, with results summarized in Table~\ref{tab:halfcheetah_ablation}. While all variants exhibit comparable performance in terms of distribution matching, the \textbf{Hard} variant achieves the highest rollout returns. This performance gap stems from the fact that optimal actions in HalfCheetah often reside on the constraint boundaries (see Figure~\ref{fig:halfcheetah_dist_combined}). Consequently, the precise boundary estimation provided by the \textbf{Hard} operation captures these edge cases more effectively than smoothed approximations. Furthermore, as expected, the \textbf{Boltzmann} variant fails to ensure strict safety, resulting in minor safety violations. Based on these findings, we adopt the \textbf{Hard} operator as the default configuration for PolyFlow.


\subsection{OT-Based Batch Coupling} \label{app:ot_batch}

To minimize the approximation error identified in the second term on the RHS of Theorem~\ref{thm:error_bound}, we incorporate OT coupling within training batches to reduce trajectory crossings. We evaluate the effectiveness of this approach on the HalfCheetah task. As reported in Table~\ref{tab:halfcheetah_ablation}, the OT batch coupling method achieves distribution matching performance comparable to the baseline while yielding slightly higher returns during rollouts.

\begin{table}[h]
    \centering
    \caption{Performance comparison on the HalfCheetah task with different configurations. We compare the standard RS (Hard) with the Softmin and Boltzmann variants ($\beta=80$). Hard+OT denotes using optimal transport to pair samples within the training batch.}
    \label{tab:halfcheetah_ablation}
    \resizebox{0.7\linewidth}{!}{ 
    \begin{tabular}{lcccc}
        \toprule
        \textbf{Metric} & \textbf{Hard} & \textbf{Softmin}($\beta=80$) & \textbf{Boltzmann}($\beta=80$) & \textbf{Hard+OT} \\
        \midrule
        R ($\uparrow$)              & 1.000 & 1.000 & 0.705 & 1.000 \\
        MMD ($\downarrow$)          & 0.036 & \textbf{0.035} & \textbf{0.035} & 0.036 \\
        W ($\downarrow$)        & 0.857 & \textbf{0.854} & 0.855 & 0.858 \\
        KL ($\downarrow$)           & 2.643 & 2.680 & 2.655 & \textbf{2.488} \\
        Acc ($\downarrow$)          & \textbf{2.708} & 2.723 & 2.716 & 2.729 \\
        TotalTime (s) ($\downarrow$)& 0.089 & 0.095 & 0.104 & \textbf{0.074} \\
        StepTime (s) ($\downarrow$) & 0.009 & 0.010 & 0.010 & \textbf{0.007 }\\
        \midrule
        Max V. Mag. ($\downarrow$)& 0.000 & 0.000 & 0.003 & 0.000 \\
        Max V. Dur. ($\downarrow$)& 0.000 & 0.000 & 0.396 & 0.000 \\
        Rollout Return ($\uparrow$) & $2977 \pm 785$ & $2773 \pm 543$ & $2729 \pm 565$ & $\mathbf{3026 \pm 301}$ \\
        \bottomrule
    \end{tabular}
    }
\end{table}

\subsection{Integration Steps} \label{app:sampling_step}
We examine the performance of PolyFlow across varying integration steps $N\in\{2,5,10,20\}$. As illustrated in Figure~\ref{fig:step_ablation}, PolyFlow demonstrates high efficiency in the low-step regime, achieving its highest rollout returns at $N=2$ and exhibits supeior trajectory smoothness. As $N$ increases, the proposed method shows a more precise alignment with the target distribution through finer discretization. Crucially, PolyFlow maintains a perfect Safety Rate ($R=1$) across all steps, demonstrating exceptional robustness to the choice of integration density.

\begin{figure*}[t]
    \centering
    \includegraphics[width=0.65\linewidth]{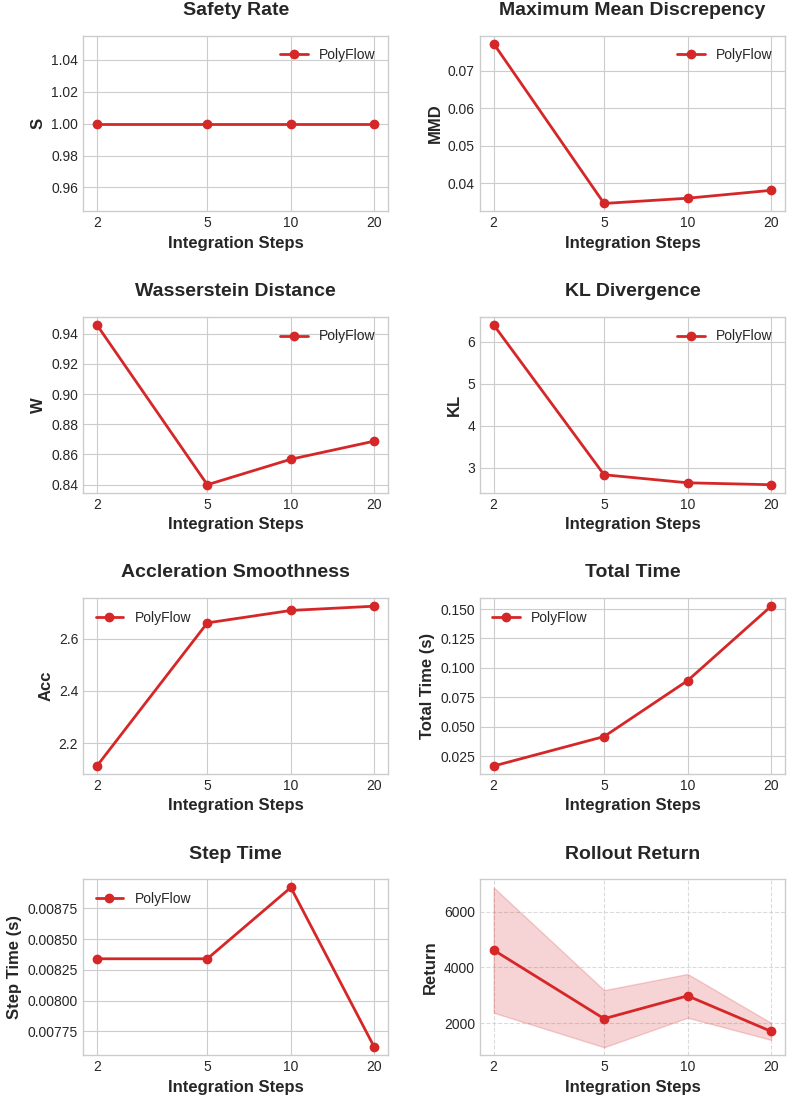}
    \caption{Impact of integration steps on PolyFlow performances in HalfCheetah task.}
    \label{fig:step_ablation}
\end{figure*}

\section{Why Standard Flow Matching Fails at Constraint Satisfaction}
\label{sec:why_standard_fails}

Constraint violations in continuous-time flow models generally stem from an entanglement of several factors: unsafe trajectories present in the training data, inherent model fitting errors, and numerical integration errors during the sampling phase. To empirically isolate these factors and demonstrate why traditional flow matching fails—thereby motivating our discrete-time, explicitly constrained formulation—we conducted controlled experiments on the Maze2D task.

\textbf{The Insufficiency of Data Filtering.} 
First, we investigated whether training exclusively on perfectly clean data could eliminate constraint violations. We filtered the original training dataset to remove all unsafe trajectories and trained a new unconstrained model, denoted as \textbf{FlowFiltered}. 

As shown in Table \ref{tab:flow_filtered}, even when trained exclusively on safe data, the \texttt{FlowFiltered} model fails to achieve strict safety (Safety Rate = $0.965$). The remaining violations are primarily caused by inherent neural network fitting errors and discretization errors during generation. This confirms that data filtering alone is insufficient to guarantee safety, underscoring the fundamental necessity of PolyFlow's explicit constraint embedding.

\begin{table}[ht]
\centering
\caption{Performance comparison of models trained on standard vs. strictly safe data (evaluated at $N=10$ steps).}
\label{tab:flow_filtered}
\begin{tabular}{lccc}
\toprule
\textbf{Metric} & \textbf{Flow ($N=10$)} & \textbf{FlowFiltered ($N=10$)} & \textbf{PolyFlow ($N=10$)} \\
\midrule
Safety ($\uparrow$) & 0.845 & 0.965 & \textbf{1.000} \\
MMD ($\downarrow$) & 3.50e-5 & 4.33e-5 & \textbf{6.20e-6} \\
W ($\downarrow$) & 5.41e-2 & 5.55e-2 & \textbf{4.08e-2} \\
KL ($\downarrow$) & 2.66e-1 & 2.30e-1 & \textbf{1.34e-1} \\
\bottomrule
\end{tabular}
\end{table}

\textbf{The Vulnerability of Numerical Integration.}
Second, to empirically isolate the impact of numerical integration errors (discretization), we evaluated an optimized continuous flow model. To eliminate data artifacts and minimize fitting errors, we utilized the aforementioned strictly safe dataset and extended the training to $10^6$ steps to ensure optimal model convergence. We then evaluated this continuous flow using a 4th-order Runge-Kutta (RK4) solver across various integration steps. 

\begin{table}[ht]
\centering
\caption{Continuous flow performance using an RK4 solver across varying integration steps on the Maze2D task.}
\label{tab:rk4_solver}
\resizebox{0.8\linewidth}{!}{
\begin{tabular}{lcccccc}
\toprule
\textbf{Metric} & \textbf{MMD} ($\downarrow$) & \textbf{W} ($\downarrow$) & \textbf{KL} ($\downarrow$) & \textbf{Safety Rate} ($\uparrow$) & \textbf{Cur} ($\downarrow$) & \textbf{Acc} ($\downarrow$) \\
\midrule
Flow1-rk4           & 8.00e-5 & 2.82e-1 & 1.370 & 0.000 & 2.262 & 1.800 \\
Flow10-rk4          & 1.68e-5 & 3.87e-2 & 1.15e-1 & 0.755 & 1.342 & 8.65e-2 \\
Flow100-rk4         & 2.42e-5 & 4.03e-2 & 9.71e-2 & 0.945 & 0.098 & 5.99e-3 \\
Flow200-rk4         & 2.47e-5 & 4.03e-2 & 9.69e-2 & 0.945 & 0.067 & 4.10e-3 \\
Flow500-rk4         & 2.50e-5 & 4.04e-2 & 9.70e-2 & 0.945 & 0.054 & 3.40e-3 \\
Flow200-rk4-jump0.9 & 2.04e-5 & 3.97e-2 & 1.11e-1 & 0.790 & 1.246 & 7.75e-2 \\
\midrule
PolyFlow10          & \textbf{6.20e-6} & 4.08e-2 & 1.34e-1 & \textbf{1.000} & 0.098 & 6.15e-3 \\
\bottomrule
\end{tabular}
}
\end{table}

The results in Table \ref{tab:rk4_solver} illustrate that even when employing a higher-order ODE solver on a well-fitted model, reducing integration steps systematically degrades both generation quality and the safety rate. Notably, even with 500 integration steps, the continuous model plateaus at a safety rate of $0.945$, failing to guarantee strict constraint satisfaction. 

Crucially, we conducted a targeted test, denoted as \texttt{Flow200-rk4-jump0.9}, where fine integration steps are taken until $t = 0.9$, followed by a single linear jump to $t = 1.0$. This single large step caused the safety rate to plummet from $0.945$ to $0.790$. This highlights a critical vulnerability: when the generation path approaches a target distribution located near a constraint boundary, large integration steps risk overshooting into unsafe regions. 

\textbf{Advantages of the Discrete-Time Formulation.} 
Unlike continuous flows that rely on external solvers to approximate an ODE, PolyFlow is mathematically discrete by definition. By embedding the constraints into the architecture and ensuring that every discrete update explicitly resides within the feasible region via ray-shooting, PolyFlow fundamentally eliminates numerical integration overshoot. This paradigm allows for strict safety satisfaction even with very few inference steps ($N = 10$), entirely bypassing the need for expensive iterative projection solvers.

\section{Can PolyFlow be Extended to Equality Constraints?}

PolyFlow can be extended to handle linear equality constraints ($Cx = u$) alongside inequality constraints ($Ax \leq b$) using two primary strategies:

\textbf{1. Null Space Projection Operator:} When equality constraints are introduced, the predicted direction vector $d_\theta$ must strictly lie within the null space of $C$. This means $C(x_0 + \lambda d_\theta) = u + \lambda C d_\theta = u$, implying $Cd_\theta = 0$. We can append a differentiable projection operator following the direction prediction network:
\begin{equation}
    d_{out} = (I - C^T(CC^T)^{-1}C)d_\theta
\end{equation}
This operation guarantees $Cd_{out} = 0$. By utilizing $d_{out}$ for the ray-shooting computations, the model inherently satisfies both sets of constraints. For $m$ equality constraints and state dimension $n$, the added complexity is $\mathcal{O}(m^2n + m^3)$. Since $m \ll n$, this projection will not significantly impact latency.

\textbf{2. Dimensionality Reduction:} For static $Cx = u$, we can precompute a particular solution $x_e$ and a null-space basis matrix $N$. Valid solutions are reparameterized as $x = x_e + Ny$ (where $y$ is a free vector with lower dimensionality). Substituting this into the inequality constraints yields:
\begin{equation}
    A(x_e + Ny) \leq b \implies ANy \leq b - Ax_e
\end{equation}
By defining $A_y = AN$ and $b_y = b - Ax_e$, we can train PolyFlow entirely in the lower-dimensional $y$-space and map back via $x = x_e + Ny$ during inference.


\end{document}